\documentclass{article} %
\usepackage{iclr2027_conference,times}

\usepackage{amsmath,amsfonts,bm}

\def\eqref#1{equation~\ref{#1}}

\def\1{\bm{1}}

\DeclareMathAlphabet{\mathsfit}{\encodingdefault}{\sfdefault}{m}{sl}
\SetMathAlphabet{\mathsfit}{bold}{\encodingdefault}{\sfdefault}{bx}{n}

\def\sR{{\mathbb{R}}}
\def\sS{{\mathbb{S}}}
\def\sT{{\mathbb{T}}}

\usepackage{hyperref}
\usepackage{url}

\usepackage{amsthm}
\newtheorem{proposition}{Proposition}

\usepackage{booktabs}
\usepackage{graphicx}
\usepackage{algorithm}
\usepackage{algpseudocode}
\usepackage{array}
\usepackage{multirow}
\usepackage{booktabs}
\usepackage{makecell}

\title{Light Entropic Optimal Transport\\on Riemannian Manifolds}

\author{Xavier Aramayo-Carrasco\\
    Applied AI Institute, \\
    Moscow, Russia
    \And
    Petr Mokrov \\
    Applied AI Institute,\\
    Moscow, Russia
    \And
    Alexander Korotin \\
    Applied AI Institute\\
    AXXX,\\
    Moscow, Russia
}

\iclrfinalcopy %
\begin{document}

\vspace{-5mm}
\maketitle

\vspace{-5mm}

\begin{abstract}
Entropic Optimal Transport (EOT) has become a practical framework for learning stochastic couplings between complex distributions, with applications in generative modeling and domain adaptation. However, most EOT solvers are designed for Euclidean spaces, while manifold extensions remain limited and often rely on costly iterative methods, simulated dynamics, or generic neural models that do not fully exploit the underlying geometry. We introduce ManifoldLightOT, a light approach for learning kernel-induced EOT couplings directly on common manifolds. Using the kernel form of the EOT solution, we construct geometry-specific Gibbs kernels together with compatible potential parameterizations for spheres, tori, $\mathrm{SO}(3)$, and $\mathrm{SE}(3)$. These choices yield closed-form normalization and directly sampleable conditional distributions. Our formulation naturally extends to products of manifolds, making it applicable to more complex geometries. The parameters of the potentials are optimized directly from samples using Monte Carlo estimates of the learning objective. Through synthetic and real-world experiments, we show that ManifoldLightOT often outperforms existing manifold OT methods while retaining direct sampling.
\end{abstract}

\vspace{-2mm}

\section{Introduction}
\label{section:introduction}

\vspace{-1mm}

In recent years, developing computationally efficient solvers for Optimal Transport (OT) and Entropic Optimal Transport (EOT) problems \citep{cuturi2013sinkhorn, villani2008optimal} has attracted considerable interest. OT seeks a coupling between prescribed source and target distributions that minimizes an expected transportation cost, while EOT augments this objective with an entropic regularization term. A well-established class of solvers operates on \textbf{discrete} probability distributions, representing the coupling as a matrix whose dimensions depend on the numbers of source and target samples; see \citet{peyre2019computational} for a survey. In this work, we focus on \textbf{continuous} solvers \citep{gushchin2023buildingbridgeschrodingercontinuous, korotin2021neuraloptimaltransportsolvers}, which learn conditional distributions or transport maps (plans), typically parameterized by neural networks, from samples of the source and target distributions. The EOT formulation is equivalent to the static Schr\"odinger bridge (SB) problem \citep{schrodinger1931umkehrung, schrodinger1932theorie, leonard2013survey} when the transport cost is induced by the reference measure of the SB.

Several neural approaches have been proposed in recent years to solve the EOT or SB problems \citep{mokrov2024energyguided, debortoli2023diffusionschrodingerbridgeapplications, tong2024simulationfreeschrodingerbridgesscore, gushchin2023entropic, korotin2024light, persiianov2026learningpopulationdynamicsinverse}. These solvers support out-of-sample estimation, enabling applications such as image generation, image-to-image translation, domain adaptation, and biological tasks including modeling cell dynamics and predicting cellular responses to perturbations. However, these methods are formulated in Euclidean spaces, where source and target samples are elements of $\sR^D$. However, these methods may not be directly adjusted when data are supported on curved manifolds. %

Switching on manifold support has proved effective across several generative modeling frameworks, including flow matching \citep{chen2024flowmatchinggeneralgeometries}, continuous normalizing flows \citep{mathieu2020riemanniancontinuousnormalizingflows}, and score-based diffusion models \citep{debortoli2022riemannianscorebasedgenerativemodelling}. Manifold-aware formulations have also been developed for neural OT and EOT, notably Riemannian Neural OT (RNOT) \citep{micheli2026riemannianneuraloptimaltransport} and Entropic Riemannian Neural OT (ERNOT) \citep{micheli2026entropicriemannianneuraloptimal}, respectively. In practice, RNOT approximates the \(c\)-transform through an additional inner optimization, while ERNOT relies on minibatch or finite-support approximations of the soft \(c\)-transform and the induced Gibbs conditionals. These approximations can introduce additional optimization error, sampling bias, and sensitivity to the chosen target support in high-dimensional settings. Motivated by these observations, we propose ManifoldLightOT, an EOT solver with geometry-specific potential parameterizations for several classes of Riemannian manifolds. Following the terminology of LightSB \citep{korotin2024light}, we use the word ``light'' to denote parameterizations that yield closed-form expressions for the normalizing term and conditional distribution of the EOT coupling.

Our \textbf{main contribution} is ManifoldLightOT, a light solver for kernel-induced EOT problems on spheres, tori, \(\mathrm{SO}(3)\), and \(\mathrm{SE}(3)\). The method combines geometry-specific Gibbs kernels with compatible potential families, yielding closed-form normalization functions and directly sampleable conditional distributions. We further show that these tractability properties are preserved under Cartesian products, extending the construction to heterogeneous product manifolds. The resulting solver is trained using Monte Carlo estimates of the learning objective from unpaired source and target samples, without additional inner optimization problems.

We evaluate ManifoldLightOT on synthetic transport problems across the considered manifolds and on real-world manifold-valued data, comparing its performance with other existing OT methods.

\textbf{Notation.} Let \(\mathcal M\) denote the underlying manifold, \(m\) its Riemannian volume measure, which reduces to the standard Lebesgue measure when \(\mathcal M=\mathbb R^d\), and \(\mathcal P(\mathcal M)\) the set of probability measures on \(\mathcal M\). We denote the source and target measures by \(\mu,\nu\in\mathcal P(\mathcal M)\), assumed to be absolutely continuous with respect to \(m\), and their corresponding points by \(x,y\in\mathcal M\). We use \(\mu(x)\) and \(\nu(y)\) to denote their densities with respect to \(m\). For a coupling \(q\in\Pi(\mu,\nu)\), we denote its density with respect to \(m\otimes m\) by \(q(x,y)\), and its conditional density by \(q(y|x)\). Accordingly, \(q(x,y)=\mu(x)q(y|x)\), the marginal constraints require \(\int_{\mathcal M}q(x,y)\,dm(y)=\mu(x)\) and \(\int_{\mathcal M}q(x,y)\,dm(x)=\nu(y)\).

\vspace{-2mm}

\section{Background}
\label{section:background}

\vspace{-2mm}

In this section, we recall the main concepts underlying our approach. We first review the formulation of entropic optimal transport on manifolds and the kernel characterization of its optimal coupling. We then recall its connection to the static Schrödinger bridge problem. Finally, we review the LightSB solver \citep{korotin2024light} as a representative light approach for parameterizing the optimal coupling, while noting that its construction is restricted to Euclidean spaces.

\textbf{Entropic Optimal Transport on Riemannian Manifolds.}
We consider the Entropic Optimal Transport (EOT) problem directly on a Riemannian manifold \(\mathcal M\). For simplicity, throughout the paper we refer to this manifold formulation simply as EOT. Let \(\mu,\nu\in\mathcal P(\mathcal M)\) be the source and target probability measures, respectively, and let \(\Pi(\mu,\nu)\) denote the set of couplings on \(\mathcal M\times\mathcal M\) with marginals \(\mu\) and \(\nu\). Given a transport cost \(c:\mathcal M\times\mathcal M\to\mathbb R\) and a regularization parameter \(\varepsilon>0\), the EOT problem \citep{cuturi2013sinkhorn,genevay2019entropy} is defined as

\vspace{-2mm}

\begin{equation}
    q^*
    \in
    \arg\min_{q\in\Pi(\mu,\nu)}
    \left\{
        \mathbb E_{(x,y)\sim q}
        \left[c(x,y)\right]
        -\varepsilon\mathrm{H}(q)
    \right\}.
    \label{equation:entropic_ot}
\end{equation}

\vspace{-1mm}

Here, \(\mathrm{H}(q)\) denotes the entropy of the coupling density \(q(x,y)\) with respect to the product Riemannian volume measure \(m\otimes m\). We assume the usual well-posedness conditions for continuous EOT, including absolute continuity, finite entropy, and finite expected cost. For \(\mathrm{SE}(3)\), we also assume finite translational second moments. While many manifold OT methods use the squared geodesic cost \(c(x,y)=\frac{1}{2}d_{\mathcal M}^2(x,y)\), our formulation uses geometry-specific kernel-induced costs.

\textbf{Kernel characterization of the EOT solution.}
The solution \(q^*\) of the EOT problem in ~\eqref{equation:entropic_ot} admits a Gibbs kernel representation \citep{genevay2019entropy,peyre2019computational}. In particular, for a given cost \(c\) and regularization parameter \(\varepsilon\), the associated Gibbs kernel is defined as $K_\varepsilon(x,y)\!=\!\exp\left(-c(x,y)/\varepsilon\right).$ With this kernel, the density of the optimal EOT coupling with respect to the product Riemannian volume measure \(m\otimes m\) admits the following form

\vspace{-2mm}

\begin{equation}
    q^*(x,y)
    =
    u^*(x)
    K_\varepsilon(x,y)
    v^*(y),
    \label{equation:eot_kernel_solution}
\end{equation}

\vspace{-1mm}

where \(u^*\) and \(v^*\) are positive scaling functions chosen such that \(q^*\) satisfies the marginal constraints \(\mu\) and \(\nu\). Using the source marginal constraint, the optimal coupling in ~\eqref{equation:eot_kernel_solution} can equivalently be expressed through its conditional distribution as

\vspace{-2mm}

\begin{equation}
    q^*(y|x)
    =
    \frac{
        1
    }{
        r^*(x)
    }K_\varepsilon(x,y)v^*(y),
    \label{equation:eot_conditional}
\end{equation}

where \(r^*(x)=\int_{\mathcal M}K_\varepsilon(x,y)v^*(y)\,dm(y)\) is the corresponding normalization function. The EOT problem in \eqref{equation:entropic_ot} is equivalent to the static Schr\"odinger bridge problem \citep{schrodinger1931umkehrung,schrodinger1932theorie,leonard2013survey} when the transport cost is induced by the reference measure from the SB.

\textbf{Light Parameterizations of the EOT Solution.}
Under the equivalence described above, EOT and static SB share the same optimal coupling \(q^*\). Following Light Schr\"odinger Bridge (LightSB) \citep{korotin2024light}, a parametric coupling \(q_\theta \approx q^*\) is learned by minimizing the Kullback-Leibler (KL) divergence \(\mathrm{KL}(q^*\,\|\,q_\theta)\), which is equivalent up to constants to minimizing the surrogate loss \(\mathbb E_{x\sim \mu}[\log r_\theta(x)]-\mathbb E_{y\sim \nu}[\log v_\theta(y)]\). Therefore, learning the conditional EOT solution in \eqref{equation:eot_conditional} can be approached by choosing a suitable parametric scaling function \(v_\theta\). In the Euclidean setting, LightSB uses a light parameterization for \(v_\theta\) as a learnable weighted mixture of Gaussian densities,

\vspace{-4mm}

\begin{equation}
v_\theta(y)=\sum_{k=1}^{K}\alpha_k\,\mathcal N(y| \mu_k,\varepsilon S_k).\notag
\end{equation}

\vspace{-2mm}

Here, the optimizable parameters are \(\theta=\{\alpha_k,\mu_k,S_k\}_{k=1}^{K}\), where \(\alpha_k>0\) are mixture weights, \(\mu_k\in\mathbb R^d\) are component means, and \(S_k\in\mathbb R^{d\times d}\) are positive-definite covariance parameters. This parameterization leads to closed-form expressions for the normalizing function \(r_\theta(x)\) and the conditional coupling \(q_\theta(y|x)\). Together with the surrogate loss above, these closed forms enable training from independent samples of \(\mu\) and \(\nu\), as well as direct sampling from \(q_\theta(y|x)\). However, this construction is based on Gaussian distributions and Euclidean identities, so it is not directly applicable to manifold-supported distributions.

\subsection{Continuous EOT Learning Setup}
\label{subsection:continuous_eot_setup}

\vspace{-2mm}

Although the EOT problem in \eqref{equation:entropic_ot} is formulated in terms of the source and target measures \(\mu\) and \(\nu\), in practical applications these distributions are typically not available explicitly but only through empirical samples. Below we formalize the continuous EOT setup considered in our work.

\noindent\fbox{\parbox{\dimexpr\linewidth-2\fboxsep-2\fboxrule\relax}{
We assume that the learner is given empirical datasets \(\{x^{(i)}\}_{i=1}^{N}\) and \(\{y^{(j)}\}_{j=1}^{M}\), with \(x^{(i)},y^{(j)}\in\mathcal M\), consisting of i.i.d. samples from unknown source and target measures \(\mu,\nu\in\mathcal P(\mathcal M)\), respectively. The two datasets are assumed to be independent and unpaired. The task is to use these samples to learn an approximation of the EOT solution \(q^*\in\Pi(\mu,\nu)\) in \eqref{equation:entropic_ot}, and in particular its conditional distribution \(q^*(y|x)\).
}}

This setup corresponds to continuous EOT learning from empirical samples and should be distinguished from discrete EOT. Discrete EOT solvers, such as Sinkhorn~\citep{cuturi2013sinkhorn} and gradient-based methods~\citep{dvurechensky2018computational}, treat the samples as empirical measures, \(\widehat{\mu}_N=\frac{1}{N}\sum_{i=1}^{N}\delta_{x^{(i)}}\) and \(\widehat{\nu}_M=\frac{1}{M}\sum_{j=1}^{M}\delta_{y^{(j)}}\), and return an \(N\times M\) coupling matrix. Such a solution does not define transport for unseen source points. In contrast, we use the samples as training data to approximate the EOT coupling between measures \(\mu\) and \(\nu\), yielding an out-of-sample model from which \(y^{\mathrm{new}}\) can be generated according to an approximation of \(q^*(y|x^{\mathrm{new}})\) for \(x^{\mathrm{new}}\sim\mu\).

Several works extend empirical OT or EOT solutions to unseen samples through map estimation, barycentric projection, or smoothing~\citep{hutter2021minimax,pooladian2021entropic,deb2021rates,manole2024pluginestimationsmoothoptimal, makkuva2020optimal}. However, they are specialized to quadratic Euclidean costs and rely on structures such as Brenier maps, Euclidean barycenters, or linear smoothers, which do not directly extend to our general kernel-induced costs on manifolds. Although SB provides an equivalent formulation of EOT under a suitable reference process~\citep{leonard2013survey}, but existing continuous SB solvers are typically tied to Euclidean reference processes and kernels~\citep{debortoli2023diffusionschrodingerbridgeapplications,gushchin2023entropic,tong2024simulationfreeschrodingerbridgesscore,korotin2024light}. Adapting them to our geometry-specific \(K_\varepsilon\) would require additional manifold-specific modifications, so we do not include them as direct baselines. We review general-cost continuous EOT solvers in \S\ref{section:related_works}.

\section{Manifold Entropic Optimal Transport Solver}
\label{section:manifold_eot_solver}

The design of ManifoldLightOT is motivated by the observation that tractability (parameterized closed-form solution, direct sampling) in continuous EOT depends on the compatibility between the Gibbs kernel and its associated probability family. In the Euclidean setting, LightSB~\citep{korotin2024light} exploits this compatibility for the quadratic cost through Gaussian kernels and Gaussian mixtures. We adopt this insight as a design criterion for manifold-valued transport. Our method specifies the EOT problem through geometry-aware Gibbs kernels. Each kernel \(K_\varepsilon\) induces the corresponding cost \(c_\varepsilon(x,y)=-\varepsilon\log K_\varepsilon(x,y)\), up to additive constants. The goal is therefore not to approximate squared-geodesic EOT on every manifold, but to construct tractable kernel-induced EOT problems whose associated probability families admit closed-form normalization and conditional sampling. We apply this principle for spheres, tori, \(\mathrm{SO}(3)\), \(\mathrm{SE}(3)\), and their Cartesian products. We derive the learning objective in \S\ref{subsection:learning_objective}, introduce the manifold-specific constructions in \S\ref{subsection:tractable_parameterizations}, and describe training and sampling in \S\ref{subsection:training_sampling}.

\vspace{-1mm}

\subsection{Learning Objective}
\label{subsection:learning_objective}

We first describe how the parametric coupling \(q_\theta\) is induced. Given a positive scaling function \(v_\theta\), with finite-dimensional parameters \(\theta\), we define
\(r_\theta(x)=\int_{\mathcal M}K_\varepsilon(x,y)v_\theta(y)\,dm(y)\)
and
\(q_\theta(y|x)=K_\varepsilon(x,y)v_\theta(y)/r_\theta(x)\).
Together with the source marginal, this conditional defines the joint coupling \(q_\theta(x,y)=\mu(x)q_\theta(y|x)\). Thus, \(q_\theta\) is not parameterized as an unrestricted neural coupling; it is determined by the choice of \(v_\theta\) and the fixed Gibbs kernel \(K_\varepsilon\).

Following the same LightSB principle, we learn \(\theta\) by minimizing the forward KL divergence from the optimal EOT coupling \(q^*\) to the induced coupling \(q_\theta\):

\vspace{-2mm}

\begin{equation}
    \min_{\theta}
    \mathrm{KL}\left(q^*\,\|\,q_\theta\right),
    \label{equation:kl_learning_objective}
\end{equation}

\vspace{-2mm}

where optimal coupling $q^*$ has form \eqref{equation:eot_kernel_solution}. Although the optimal coupling \(q^*\) is unknown, the terms in \eqref{equation:kl_learning_objective} that depend on \(\theta\) can be expressed using only its source and target marginals. In particular, \citet[Proposition~3.1]{korotin2024light} show that \eqref{equation:kl_learning_objective} has the same minimizers as

\vspace{-2mm}
\begin{equation}
    \min_{\theta}\mathcal L(\theta),
    \qquad
    \mathcal L(\theta)
    \overset{\text{def}}{=}
    \mathbb E_{x\sim \mu}
    \left[\log r_\theta(x)\right]
    -
    \mathbb E_{y\sim \nu}
    \left[\log v_\theta(y)\right].
    \label{equation:lightsb_learning_objective}
\end{equation}

\vspace{-1mm}

Therefore, the objective can be estimated from independent, unpaired samples from \(\mu\) and \(\nu\), without requiring access to or samples from the unknown optimal coupling \(q^*\). Importantly, this objective remains valid on manifolds, since its derivation relies only on the marginal constraints of the optimal coupling and not on the particular geometry of the underlying state space. %

\vspace{-1mm}

\subsection{Tractable Manifold-aware Parameterizations}
\label{subsection:tractable_parameterizations}

In this subsection, we introduce geometry-specific parameterizations \(v_\theta\) of the optimal scaling function \(v^*\) that yield closed-form expressions for the corresponding normalization function \(r_\theta\) and conditional distribution \(q_\theta(y|x)\). We consider spheres (\(\sS^{d-1}\)), tori (\(\sT^d\)), the rotation group (\(\mathrm{SO}(3)\)), the special Euclidean group (\(\mathrm{SE}(3)\)), and Cartesian products of these manifolds. We first state a general construction underlying all the proposed parameterizations. 

\begin{proposition}[Tractable light parameterizations]
\label{proposition:parameterization}
Let \(K_\varepsilon\) be a positive kernel on \(\mathcal M\times\mathcal M\), and let \(v_{\theta,k}:\mathcal M\to\mathbb R_{+}\) be positive potential components such that, for each \(k\), the product \(K_\varepsilon(x,y)v_{\theta,k}(y)\) admits a closed-form normalization
\(r_{\theta,k}(x)=\int_{\mathcal M}K_\varepsilon(x,y)v_{\theta,k}(y)\,dm(y)\)
and the normalized conditional
\(q_{\theta,k}(y|x)=K_\varepsilon(x,y)v_{\theta,k}(y)/r_{\theta,k}(x)\)
can be sampled directly. For the mixture parameterization \(v_\theta(y)=\sum_{k=1}^{K}\alpha_k v_{\theta,k}(y)\), with \(\alpha_k>0\), it follows that

\vspace{-3mm}

\begin{equation}
    r_\theta(x)
    =
    \sum_{k=1}^{K}\alpha_k r_{\theta,k}(x),
    \qquad
    q_\theta(y|x)
    =
    \sum_{k=1}^{K}
    w_{\theta,k}(x)
    q_{\theta,k}(y|x),
\end{equation}

\vspace{-2mm}

where \(w_{\theta,k}(x)=\alpha_k r_{\theta,k}(x)/r_\theta(x)\). Therefore, the compatibility of \(K_\varepsilon\) and \(v_{\theta,k}\) yields closed-form normalization and direct conditional sampling for \(q_\theta\).
\end{proposition}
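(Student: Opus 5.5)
The plan is to substitute the mixture form of \(v_\theta\) into the definitions of \(r_\theta\) and \(q_\theta(y|x)\) from \S\ref{subsection:learning_objective} and use only linearity of the integral. First, for the normalization we write
\begin{equation*}
r_\theta(x)=\int_{\mathcal M}K_\varepsilon(x,y)\sum_{k=1}^{K}\alpha_k v_{\theta,k}(y)\,dm(y)=\sum_{k=1}^{K}\alpha_k\int_{\mathcal M}K_\varepsilon(x,y)v_{\theta,k}(y)\,dm(y)=\sum_{k=1}^{K}\alpha_k r_{\theta,k}(x).
\end{equation*}
Exchanging the finite sum and the integral is justified because all integrands are nonnegative, since \(K_\varepsilon>0\), \(v_{\theta,k}>0\) and \(\alpha_k>0\). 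Finiteness of \(r_\theta\) then follows from finiteness of each \(r_{\theta,k}\), which is part of the hypothesis that each component has a closed-form normalization. Positivity of \(r_\theta(x)\) follows from strict positivity of the integrand on a set of positive \(m\)-measure.

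Second, for the conditional we multiply and divide each summand by \(r_{\theta,k}(x)\):
\begin{equation*}
q_\theta(y|x)=\frac{\sum_{k}\alpha_k K_\varepsilon(x,y)v_{\theta,k}(y)}{r_\theta(x)}=\sum_{k=1}^{K}\frac{\alpha_k r_{\theta,k}(x)}{r_\theta(x)}\cdot\frac{K_\varepsilon(x,y)v_{\theta,k}(y)}{r_{\theta,k}(x)}=\sum_{k=1}^{K}w_{\theta,k}(x)\,q_{\theta,k}(y|x).
\end{equation*}
This step needs \(r_{\theta,k}(x)>0\), which again holds by positivity of the integrand. By the first step, the weights satisfy \(w_{\theta,k}(x)>0\) and \(\sum_k w_{\theta,k}(x)=1\), so \(q_\theta(\cdot|x)\) is a genuine finite mixture of the probability densities \(q_{\theta,k}(\cdot|x)\).

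Finally, the tractability claims follow directly. The normalization \(r_\theta\) is a finite sum of closed-form terms, and sampling from \(q_\theta(\cdot|x)\) proceeds by ancestral sampling: draw an index \(k\sim\mathrm{Categorical}(w_{\theta,1}(x),\dots,w_{\theta,K}(x))\), whose weights are computable in closed form, and then draw \(y\sim q_{\theta,k}(\cdot|x)\), which is directly sampleable by assumption. The marginal law of \(y\) is then the mixture above.

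There is no real obstacle in this argument. The only point requiring care is the measure-theoretic bookkeeping: finiteness and positivity of the \(r_{\theta,k}\), and the interchange of sum and integral. Both are handled by the nonnegativity of all the integrands.
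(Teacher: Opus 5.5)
Your proposal is correct and follows essentially the same route as the paper's proof: linearity of the integral gives \(r_\theta=\sum_k\alpha_k r_{\theta,k}\), multiplying and dividing each summand by \(r_{\theta,k}(x)\) gives the mixture weights, and direct sampling is a categorical draw over \(k\) followed by a draw from \(q_{\theta,k}(\cdot|x)\). Your added checks that each \(r_{\theta,k}\) is finite and positive, and that the weights sum to one, are slightly more careful than the paper's argument but change nothing in its substance.
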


\vspace{-2mm}

It is important to note that the construction in Proposition~\ref{proposition:parameterization} extends the compatibility principle behind the Gaussian-mixture adjusted-potential parameterization used in LightSB. In the Euclidean quadratic-cost setting, this tractable construction can be written using the adjusted inner-product Gibbs kernel with Gaussian-mixture adjusted potentials, i.e., \(\exp(-\|x-y\|^2/(2\varepsilon))=\exp(-\|x\|^2/(2\varepsilon))\exp(\langle x,y\rangle/\varepsilon)\exp(-\|y\|^2/(2\varepsilon))\), so the usual Gaussian Gibbs kernel differs from the inner-product kernel only by separate factors depending on \(x\) and \(y\). We next apply this idea to several common non-Euclidean manifolds by constructing compatible choices of \(K_\varepsilon\), \(v_{\theta,k}\), and induced costs \(c(x,y)\) for which the normalization and conditional sampling conditions of Proposition~\ref{proposition:parameterization} are satisfied in closed form. All proofs appear in Appendix~\ref{appendix:proofs}.

\vspace{-2mm}
\paragraph{Sphere.}
Let \(\mathcal M=\sS^{d-1}\), with \(x,y\in\sS^{d-1}\). For \(g\in\mathbb R^d\), define the von Mises-Fisher normalizing function \(Z_d(g):=\int_{\sS^{d-1}}\exp(g^\top x)\,dm(x)\), and let \(\operatorname{vMF}(g)\) denote the corresponding von Mises-Fisher distribution with natural parameter \(g\).

\vspace{-3mm}
\begin{equation}
    K_\varepsilon(x,y)=\exp\!\left(\frac{x^\top y}{\varepsilon}\right),
    \qquad
    v_{\theta,k}(y)=\exp(f_k^\top y).
    \label{equation:sphere_kernel_potential}
\end{equation}

\vspace{-3mm}

\begin{equation}
    r_{\theta,k}(x)=Z_d\!\left(f_k+\frac{x}{\varepsilon}\right),
    \qquad
    q_{\theta,k}(\cdot|x)=\operatorname{vMF}\!\left(f_k+\frac{x}{\varepsilon}\right).
    \label{equation:sphere_normalizer_conditional}
\end{equation}

\vspace{-2mm}

The product \(K_\varepsilon(x,y)v_{\theta,k}(y)\) remains in the von Mises-Fisher family, yielding the closed-form normalization. The learnable parameters are \(\theta=\{\alpha_k,f_k\}_{k=1}^{K}\), with \(\alpha_k>0\) and \(f_k\in\mathbb R^d\). From \(K_\varepsilon(x,y)=\exp[-c(x,y)/\varepsilon]\), this kernel induces the cost \(c(x,y)=-x^\top y\). Conditional samples are drawn directly from \(\operatorname{vMF}(f_k+x/\varepsilon)\).

\vspace{-2mm}
\paragraph{Torus.}
Let \(\mathcal M=\sT^d\), with \(x,y\in[0,2\pi)^d\) denoting their angular coordinates and \(x^j,y^j\) their \(j\)-th coordinates. Define \(h_{kj}(x):=(a_{kj}+\varepsilon^{-1}\cos x^j,\,b_{kj}+\varepsilon^{-1}\sin x^j)^\top\) and \(\rho_{kj}(x):=\|h_{kj}(x)\|_2\). Let \(I_0\) be the modified Bessel function of the first kind and \(\operatorname{vM}(h)\) the circular von Mises distribution with natural parameter \(h\in\mathbb R^2\).

\vspace{-3mm}

\begin{equation}
    K_\varepsilon(x,y)=\prod_{j=1}^{d}\exp\!\left[\frac{\cos(x^j-y^j)}{\varepsilon}\right],
    \quad
    v_{\theta,k}(y)=\prod_{j=1}^{d}\exp\!\left[a_{kj}\cos y^j+b_{kj}\sin y^j\right].
    \label{equation:torus_kernel_potential}
\end{equation}

\vspace{-3mm}

\begin{equation}
    r_{\theta,k}(x)=\prod_{j=1}^{d}2\pi I_0\!\left(\rho_{kj}(x)\right),
    \qquad
    q_{\theta,k}(\cdot|x)=\prod_{j=1}^{d}\operatorname{vM}\!\left(h_{kj}(x)\right).
    \label{equation:torus_normalizer_conditional}
\end{equation}

\vspace{-2mm}

The learnable parameters are \(\theta=\{\alpha_k,a_k,b_k\}_{k=1}^{K}\), with \(\alpha_k\!>\!0\) and \(a_k,b_k\in\mathbb R^d\). The induced cost is {\(c(x,y)=-\sum_{j=1}^{d}\cos(x^j-y^j)\)}, which is equivalent to \(c(x,y)=\sum_{j=1}^{d}[1-\cos(x^j-y^j)]\) up to an additive constant. Sampling is done independently for each coordinate from \(\operatorname{vM}(h_{kj}(x))\).

\vspace{-2mm}

\paragraph{Rotation group.}
Let \(\mathcal M=\mathrm{SO}(3)\), with \(x,y\) represented by unit quaternions in \(\sS^3\), where \(x\) and \(-x\) represent the same rotation. To preserve this antipodal symmetry, we use hyperbolic-cosine kernel and potential components. Let \(\lambda_R>0\) denote a fixed rotational kernel scale, and define \(\pi_{k,s}(x):=Z_4(f_k+s\lambda_R x/\varepsilon)/[Z_4(f_k+\lambda_R x/\varepsilon)+Z_4(f_k-\lambda_R x/\varepsilon)]\) for \(s\in\{-1,+1\}\).

\vspace{-3mm}

\begin{equation}
    K_\varepsilon(x,y)=\cosh\!\left(\frac{\lambda_R}{\varepsilon}\langle x,y\rangle\right),
    \qquad
    v_{\theta,k}(y)=\cosh(\langle f_k,y\rangle).
    \label{equation:so3_kernel_potential}
\end{equation}

\vspace{-3mm}
\begin{equation}
    r_{\theta,k}(x)\!=\!\frac{1}{2}\!\left[
    Z_4\!\left(f_k\!+\!\frac{\lambda_R}{\varepsilon}x\right)
    \!\!+\!
    Z_4\!\left(f_k\!-\!\frac{\lambda_R}{\varepsilon}x\right)
    \right],
    \quad
    \!\!\!q_{\theta,k}(\cdot|x)
    \!=\!\!\!\!\!\!
    \sum_{s\in\{-1,+1\}}
    \!\!\!\!\!\!\!\pi_{k,s}(x)
    \operatorname{vMF}\!\left(\!f_k\!+\!s\frac{\lambda_R}{\varepsilon}x\!\right).
    \label{equation:so3_normalizer_conditional}
\end{equation}

\vspace{-1mm}

Using the product identity for two hyperbolic cosines yields the closed-form normalizer above. The learnable parameters are \(\theta=\{\alpha_k,f_k\}_{k=1}^{K}\), with \(\alpha_k>0\) and \(f_k\in\mathbb R^4\). The kernel induces the cost \(c(x,y)=-\varepsilon\log\cosh(\lambda_R\langle x,y\rangle/\varepsilon)\). Conditional sampling first draws \(s\) according to \(\pi_{k,s}(x)\), then samples from the corresponding von Mises-Fisher distribution and maps the quaternion to \(\mathrm{SO}(3)\).

\vspace{-2mm}

\paragraph{Special Euclidean group.}
Let \(\mathcal M=\mathrm{SE}(3)\), and write \(x=(x^t,x^R)\), where \(x^t\in\mathbb R^3\) is the translational component and \(x^R\in\sS^3\) is the unit-quaternion representation of the rotational component. We combine an unnormalized Gaussian construction for translations with the \(\mathrm{SO}(3)\) construction above. Let \(K_\varepsilon^t(x^t,y^t):=\exp(-\|x^t-y^t\|_2^2/(2\sigma_\varepsilon^2))\), and let \(K_\varepsilon^R\), \(r_{\theta,k}^R\), and \(q_{\theta,k}^R\) denote the rotational kernel, normalizer, and conditional from the \(\mathrm{SO}(3)\) construction.

\vspace{-3mm}

\begin{equation}
\begin{gathered}
    K_\varepsilon(x,y)=K_\varepsilon^t(x^t,y^t)K_\varepsilon^R(x^R,y^R),\\
    v_{\theta,k}(y)=\exp\!\left[-\frac{1}{2}(y^t-\mu_k)^\top\Sigma_k^{-1}(y^t-\mu_k)\right]\cosh(\langle f_k,y^R\rangle).
\end{gathered}
\label{equation:se3_kernel_potential}
\end{equation}

\begin{equation}
    r_{\theta,k}(x)=r_{\theta,k}^{t}(x^t)r_{\theta,k}^{R}(x^R),
    \qquad
    q_{\theta,k}(y|x)=q_{\theta,k}^{t}(y^t|x^t)q_{\theta,k}^{R}(y^R|x^R).
    \label{equation:se3_normalizer_conditional}
\end{equation}

The translational normalizer is \(r_{\theta,k}^{t}(x^t)=(2\pi)^{3/2}\sigma_\varepsilon^3\det(\Sigma_k)^{1/2}\det(\Sigma_k+\sigma_\varepsilon^2I_3)^{-1/2}\exp[-\frac{1}{2}(x^t-\mu_k)^\top(\Sigma_k+\sigma_\varepsilon^2I_3)^{-1}(x^t-\mu_k)]\). The translational conditional is Gaussian, \(q_{\theta,k}^{t}(\cdot|x^t)=\mathcal N(\cdot|\bar{\mu}_k,A_k)\), with \(A_k=(\sigma_\varepsilon^{-2}I_3+\Sigma_k^{-1})^{-1}\) and \(\bar{\mu}_k=A_k(\sigma_\varepsilon^{-2}x^t+\Sigma_k^{-1}\mu_k)\). The learnable parameters are \(\theta=\{\alpha_k,\mu_k,\Sigma_k,f_k\}_{k=1}^{K}\), with \(\alpha_k>0\), \(\mu_k\in\mathbb R^3\), positive-definite \(\Sigma_k\in\mathbb R^{3\times3}\), and \(f_k\in\mathbb R^4\). The product kernel induces the transport cost \(c(x,y)=\varepsilon\|x^t-y^t\|_2^2/(2\sigma_\varepsilon^2)-\varepsilon\log\cosh(\lambda_R\langle x^R,y^R\rangle/\varepsilon)\). Sampling draws the translational and rotational components independently from \(q_{\theta,k}^{t}\) and \(q_{\theta,k}^{R}\).

\begin{proposition}[Closure under Cartesian products]
\label{proposition:product_parameterization}
Let \(\mathcal M=\prod_{j=1}^{J}\mathcal M_j\), with product measure \(m=\bigotimes_{j=1}^{J}m_j\). Suppose that, for each \(\mathcal M_j\), the kernel \(K_{\varepsilon,j}\) and potential component \(v_{\theta,k,j}\) satisfy the tractability conditions of Proposition~\ref{proposition:parameterization}, with closed-form normalizer \(r_{\theta,k,j}\) and directly sampleable conditional \(q_{\theta,k,j}\). Then, the product parameterization

\vspace{-4mm}

\begin{equation}
    K_\varepsilon(x,y)=\prod_{j=1}^{J}K_{\varepsilon,j}(x^j,y^j),
    \qquad
    v_{\theta,k}(y)=\prod_{j=1}^{J}v_{\theta,k,j}(y^j)
    \label{equation:product_kernel_potential}
\end{equation}

\vspace{-3mm}

also satisfies the conditions of Proposition~\ref{proposition:parameterization}, with

\vspace{-4mm}

\begin{equation}
    r_{\theta,k}(x)=\prod_{j=1}^{J}r_{\theta,k,j}(x^j),
    \qquad
    q_{\theta,k}(y|x)=\prod_{j=1}^{J}q_{\theta,k,j}(y^j|x^j).
    \label{equation:product_normalizer_conditional}
\end{equation}

Therefore, closed-form normalization and direct conditional sampling are preserved.
\end{proposition}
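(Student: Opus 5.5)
The plan is a direct Fubini--Tonelli computation, since everything factorizes over the coordinates. Fix a component index \(k\) and a point \(x=(x^1,\dots,x^J)\in\mathcal M\). By the definition in Proposition~\ref{proposition:parameterization}, the normalizer is
\[
r_{\theta,k}(x)=\int_{\mathcal M}K_\varepsilon(x,y)\,v_{\theta,k}(y)\,dm(y).
\]
Substituting the product forms in \eqref{equation:product_kernel_potential} gives the integrand \(\prod_{j=1}^J K_{\varepsilon,j}(x^j,y^j)v_{\theta,k,j}(y^j)\). This is a product of nonnegative measurable functions, each depending only on \(y^j\). The measure \(m=\bigotimes_j m_j\) is a product measure, and each factor space carries a \(\sigma\)-finite Riemannian volume. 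Tonelli's theorem therefore lets us write the integral as an iterated integral that splits into \(\prod_{j}\int_{\mathcal M_j}K_{\varepsilon,j}(x^j,y^j)v_{\theta,k,j}(y^j)\,dm_j(y^j)=\prod_j r_{\theta,k,j}(x^j)\).

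Three checks then complete the verification of the hypotheses of Proposition~\ref{proposition:parameterization}.

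\emph{Positivity.} \(K_\varepsilon\) and \(v_{\theta,k}\) are positive, being finite products of positive functions.

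\emph{Finiteness and closed form.} Each factor \(r_{\theta,k,j}(x^j)\) is finite, positive and closed-form by hypothesis. Hence the product is as well. This is the only place where some care is needed: Tonelli requires no integrability assumption, and finiteness follows from the finiteness of each factor.

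\emph{Conditional.} Dividing, we get
\[
q_{\theta,k}(y|x)=\frac{K_\varepsilon(x,y)\,v_{\theta,k}(y)}{r_{\theta,k}(x)}
=\prod_j \frac{K_{\varepsilon,j}(x^j,y^j)\,v_{\theta,k,j}(y^j)}{r_{\theta,k,j}(x^j)}
=\prod_j q_{\theta,k,j}(y^j|x^j),
\]
which is \eqref{equation:product_normalizer_conditional}.

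Direct sampling is then immediate. The conditional density with respect to \(m\) factorizes into densities with respect to the \(m_j\). So \(q_{\theta,k}(\cdot|x)\) is the product measure \(\bigotimes_j q_{\theta,k,j}(\cdot|x^j)\). A sample is obtained by drawing each \(y^j\sim q_{\theta,k,j}(\cdot|x^j)\) independently, which is possible by hypothesis, and concatenating the draws.

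Having verified the hypotheses for every \(k\), the mixture statement follows from Proposition~\ref{proposition:parameterization} itself. The mixture \(v_\theta=\sum_k\alpha_k v_{\theta,k}\) on the product manifold inherits closed-form normalization \(r_\theta=\sum_k\alpha_k\prod_j r_{\theta,k,j}\) and direct sampling. One first draws \(k\) with weights \(w_{\theta,k}(x)\) and then samples the coordinates independently as above.

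None of these steps is a real obstacle. The only point worth stating explicitly is the measure-theoretic justification for the factorization: \(\sigma\)-finiteness of each \(m_j\), together with nonnegativity of the integrand, so that Tonelli applies.
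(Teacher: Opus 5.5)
Your proposal is correct and follows essentially the same route as the paper: factor the kernel--potential product, split the integral over the product measure $m=\bigotimes_j m_j$ into a product of per-coordinate normalizers, divide to obtain the factorized conditional, and sample each coordinate independently. Your explicit appeal to Tonelli (via $\sigma$-finiteness of each $m_j$ and nonnegativity of the integrand), together with the positivity and finiteness checks, only makes rigorous what the paper's proof does implicitly.
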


Proposition~\ref{proposition:product_parameterization} allows the parameterizations introduced above to be combined without additional normalization or sampling procedures. In particular, it directly yields tractable parameterizations on product spaces such as \(\mathrm{SO}(3)^J\), \(\mathrm{SE}(3)^J\), and heterogeneous products of the considered manifolds.

We summarize the proposed kernels, potential components, closed-form normalizers, conditional distributions, and induced transport costs in Table~\ref{table:manifold_parameterizations}.

\vspace{-3mm}

\begin{table*}[h]
\centering
\fontsize{7.5pt}{9pt}\selectfont
\renewcommand{\arraystretch}{1.35}
\setlength{\tabcolsep}{5pt}
\caption{Summary of the proposed manifold-aware parameterizations and induced transport costs. For \(\mathrm{SO}(3)\), \(x,y\in\mathbb S^3\) denote unit-quaternion representatives, with antipodal points identified.}
\label{table:manifold_parameterizations}

\makebox[\textwidth][c]{%
\begin{tabular}{
    c
    >{\raggedright\arraybackslash}p{0.28\textwidth}
    >{\raggedright\arraybackslash}p{0.28\textwidth}
    >{\raggedright\arraybackslash}p{0.25\textwidth}
}
\toprule
$\mathcal{M}$
&
\multicolumn{1}{c}{\shortstack{\(K_\varepsilon\)\\[2pt]\(v_{\theta,k}\)}}
&
\multicolumn{1}{c}{\shortstack{\(r_{\theta,k}\)\\[2pt]\(q_{\theta,k}\)}}
&
\multicolumn{1}{c}{\shortstack{Induced cost\\[2pt]\(c(x,y)\)}}
\\
\midrule

\(\sS^{d-1}\)
&
\(\begin{aligned}
&\exp\!\left(\frac{x^\top y}{\varepsilon}\right)\\[3pt]
&\exp(f_k^\top y)
\end{aligned}\)
&
\(\begin{aligned}
&Z_d\!\left(f_k\!+\!\frac{x}{\varepsilon}\right)\\[3pt]
&\operatorname{vMF}\!\left(f_k\!+\!\frac{x}{\varepsilon}\right)
\end{aligned}\)
&
\(
-x^\top y
\)
\\

\midrule

\(\sT^d\)
&
\(\begin{aligned}
&\prod_{j=1}^{d}
       \exp\!\left[\frac{\cos(x^j\!-\!y^j)}{\varepsilon}\right]\\[3pt]
&\prod_{j=1}^{d}
       \exp[a_{kj}\cos y^j\!+\!b_{kj}\sin y^j]
\end{aligned}\)
&
\(\begin{aligned}
&\prod_{j=1}^{d}2\pi I_0(\rho_{kj})\\[3pt]
&\prod_{j=1}^{d}\operatorname{vM}(h_{kj})
\end{aligned}\)
&
\mbox{$
-\sum_{j=1}^{d}
\cos(x^j\!-\!y^j)
$}
\\

\midrule

\(\mathrm{SO}(3)\)
&
\(\begin{aligned}
&\cosh\!\left(
        \frac{\lambda_R}{\varepsilon}
        \langle x,y\rangle
    \right)\\[3pt]
&\cosh(\langle f_k,y\rangle)
\end{aligned}\)
&
\(\begin{aligned}
&\frac{
        Z_4\!\left(f_k\!+\!\frac{\lambda_R}{\varepsilon}x\right)
        \!+\!
        Z_4\!\left(f_k\!-\!\frac{\lambda_R}{\varepsilon}x\right)
    }{2}\\[3pt]
&\sum_{s=\pm1}\pi_{k,s}(x)
    \operatorname{vMF}\!\left(
        f_k\!+\!s\frac{\lambda_R}{\varepsilon}x
    \right)
\end{aligned}\)
&
\(
-\varepsilon
\log\cosh\!\left(
\frac{\lambda_R}{\varepsilon}
\langle x,y\rangle
\right)
\)
\\

\midrule

\(\mathrm{SE}(3)\)
&
\(\begin{aligned}
&K_\varepsilon^t K_\varepsilon^R\\[3pt]
&\exp\!\left[
    -\frac{1}{2}
    (y^t\!-\!\mu_k)^\top
    \!\Sigma_k^{-1}\!
    (y^t\!-\!\mu_k)
    \right]\\
&\quad\times
    \cosh(\langle f_k,y^R\rangle)
\end{aligned}\)
&
\(\begin{aligned}
&r_{\theta,k}^t(x^t)r_{\theta,k}^R(x^R)\\[3pt]
&q_{\theta,k}^t(y^t|x^t)
    q_{\theta,k}^R(y^R|x^R)
\end{aligned}\)
&
\(\begin{aligned}
&
    \frac{\varepsilon}{2\sigma_\varepsilon^2}
    \|x^t\!-\!y^t\|_2^2\\[-1pt]
&\quad
    -\varepsilon
    \log\cosh\!\left(
    \frac{\lambda_R}{\varepsilon}
    \langle x^R,y^R\rangle
    \right)
\end{aligned}\)
\\

\bottomrule
\end{tabular}%
}

\end{table*}

\vspace{-2mm}

\subsection{Training and Sampling}
\label{subsection:training_sampling}

\textbf{Training.} Training the proposed parameterizations requires only optimizing their learnable parameters \(\theta\) using the objective in ~\eqref{equation:lightsb_learning_objective}. Since both \(v_\theta(y)\) and \(r_\theta(x)\) are available in closed form for all the constructions introduced above, the loss can be estimated from independent minibatches \(\{x^{(i)}\}_{i=1}^{B_0}\sim \mu\) and \(\{y^{(j)}\}_{j=1}^{B_1}\sim \nu\) as
\begin{equation}
    \widehat{\mathcal L}(\theta)
    =
    -\frac{1}{B_1}\sum_{j=1}^{B_1}\log v_\theta\!\left(y^{(j)}\right)
    +
    \frac{1}{B_0}\sum_{i=1}^{B_0}\log r_\theta\!\left(x^{(i)}\right).
    \label{equation:monte_carlo_learning_objective}
\end{equation}
This estimate can be optimized using standard gradient-based methods and requires neither paired samples, intermediate trajectory simulation, nor additional inner optimization problems.

\textbf{Sampling.} Given a source point \(x\) and trained parameters \(\theta\), samples from \(q_\theta(\cdot|x)\) are generated using the geometry-specific samplers described in the corresponding constructions of \S \ref{subsection:tractable_parameterizations}. Algorithm~\ref{algorithm:sampling} summarizes the general sampling procedure.

\begin{algorithm}[h]
\caption{Sampling from the learned EOT coupling}
\label{algorithm:sampling}
\begin{algorithmic}[1]
\Require Source point \(x\), trained parameters \(\theta\)
\For{\(k=1,\ldots,K\)}
    \State Compute \(r_{\theta,k}(x)\)
    \State \(\widetilde w_k\gets\alpha_k r_{\theta,k}(x)\)
\EndFor
\State \(w_k\gets\widetilde w_k/\sum_{j=1}^{K}\widetilde w_j\), for \(k=1,\ldots,K\)
\State Sample \(k\sim\operatorname{Categorical}(w_1,\ldots,w_K)\)
\State Sample \(y\sim q_{\theta,k}(\cdot|x)\) using the corresponding manifold-specific sampler
\State \Return \(y\)
\end{algorithmic}
\end{algorithm}

\vspace{-2mm}

\section{Related Works}
\label{section:related_works}
\vspace{-2mm}

In this section, we review two categories of continuous optimal transport methods most relevant to our setting. We first discuss general continuous OT and EOT solvers in \S\ref{subsection:continuous_solvers}, which learn transport maps or coupling representations from samples and support out-of-sample inference. We then consider Riemannian-aware OT solvers in \S\ref{subsection:riemmanian_solvers}, which explicitly incorporate the geometry of the underlying manifold into the formulation.

\vspace{-2mm}

\subsection{Continous OT Solvers}
\label{subsection:continuous_solvers}
\vspace{-1mm}

Continuous neural OT methods provide a flexible framework for learning transport directly from samples. In \citet{seguy2018largescaleoptimaltransportmapping} develop a stochastic optimization approach for large-scale regularized OT, while \citet{daniels2025scorebasedgenerativeneuralnetworks} use score-based models to sample from regularized transport couplings. Neural Optimal Transport (NeuralOT)~\citep{korotin2023neural} introduces a minimax formulation for learning transport maps and plans under strong and weak transport costs. This framework is further generalized by NeuralOT with General Cost Functionals~\citep{asadulaev2024neuraloptimaltransportgeneral}, which accommodates a broader class of transport objectives beyond standard pointwise costs. For entropy-regularized OT, Energy-guided Entropic NOT~\citep{mokrov2024energyguided} parameterizes conditional transport plans using energy-based models and supports general continuous ground costs. For comparison, we consider NeuralOT as a representative general continuous OT baseline.

\vspace{-2mm}

\subsection{Continous Riemmanian OT Solvers}
\label{subsection:riemmanian_solvers}
\vspace{-1mm}

Riemannian Neural Optimal Transport (RNOT)~\citep{micheli2026riemannianneuraloptimaltransport} learns continuous optimal transport maps directly on Riemannian manifolds for the squared geodesic cost. It parameterizes a continuous potential whose \(c\)-transform enforces \(c\)-concavity and constructs the transport map $T(x)$, allowing the learned map to transport previously unseen source samples without discretizing the manifold. Entropic Riemannian Neural Optimal Transport (ERNOT)~\citep{micheli2026entropicriemannianneuraloptimal} solves entropically regularized OT on Riemannian manifolds by learning a neural Schr\"odinger potential that defines the transport coupling. For each source point, this coupling gives a conditional distribution over target points. When a single transported point is required, ERNOT summarizes this distribution using manifold-aware constructions, such as its Riemannian barycenter or the mode of a heat-smoothed version of the conditional distribution.

\section{Experimental Results}
\label{section:experimental_results}
\vspace{-2mm}

In this section, we evaluate our method against the manifold-aware solvers introduced in \S\ref{section:related_works}. We consider synthetic transport problems on \(\sS^{d-1}\), \(\sT^d\), \(\mathrm{SO}(3)\), and \(\mathrm{SE}(3)\), together with real-world applications including continental drift on \(\sS^2\) and crystallographic texture modeling on \(\mathrm{SO}(3)\). Across all experiments, performance is measured using manifold-aware metrics comparing transported source samples with samples from the target distribution. Further details on dataset construction and metric computation are provided in Appendices~\ref{appendix:datasets} and~\ref{appendix:metrics}, respectively. All RNOT and ERNOT experiments use the squared geodesic cost \(c(x,y)=d_{\mathcal M}(x,y)^2\), whereas NeuralOT uses the corresponding manifold-specific induced cost; the full cost expressions are given in Table~\ref{table:manifold_parameterizations}. This protocol tests
how each approach performs as a practical continuous transport solver on the same
source and target distributions, while making the cost choice of each method
explicit.

\textbf{Synthetic Experiments.} For the synthetic experiments, source and target samples are independently drawn from known distributions defined directly on each manifold. We first consider high-dimensional spheres and tori. On \(\sS^{d-1}\), the source is uniform on the sphere and the target is obtained by mapping Gaussian tangent-space samples around a fixed reference point through the exponential map. On \(\sT^d\), the source is uniform in angular coordinates, while the target is obtained from a Gaussian distribution in angular coordinates with periodic wrapping to \([0,2\pi)^d\). We additionally consider setups on \(\mathrm{SO}(3)\) and \(\mathrm{SE}(3)\). On \(\mathrm{SO}(3)\), source and target distributions are generated by mapping Gaussian samples from the Lie algebra through the exponential map, with different center rotations. On \(\mathrm{SE}(3)\), the source combines Haar-uniform rotations with uniformly distributed translations, while the target combines concentrated rotations around a reference orientation with Gaussian translations. We report the squared Maximum Mean Discrepancy (MMD) Table~\ref{table:synthetic_results}. Additional results for high dimensional spheres and tori can be found in Table \ref{table:high_dim_sphere_torus} in Appendix \ref{appendix:additional_results}.

\vspace{-5mm}

\begin{table*}[h]
\centering
\caption{MMD$^2$ results across different manifolds, reported in units of $10^{-2}$. We report mean $\pm$ standard deviation over five seeds. Lower is better.}
\vspace{2mm}
\label{table:synthetic_results}
\resizebox{\textwidth}{!}{
\begin{tabular}{lcccccc}
\toprule
Method & $\sS^{60}$ & $\sS^{100}$ & $\sT^{60}$ & $\sT^{100}$ & $\mathrm{SO}(3)$ & $\mathrm{SE}(3)$ \\
\midrule
ManifoldLightOT (ours)
& $\mathbf{0.07 \pm 0.00}$
& $0.45 \pm 0.05$
& $\mathbf{0.08 \pm 0.00}$
& $\mathbf{0.08 \pm 0.00}$
& $\mathbf{0.03 \pm 0.01}$
& $\mathbf{0.18 \pm 0.01}$ \\

RNOT
& $0.18 \pm 0.04$
& $1.47 \pm 0.21$
& $2.15 \pm 0.20$
& $23.15 \pm 0.57$
& $0.61 \pm 0.27$
& $0.23 \pm 0.04$ \\

ERNOT
& $0.22 \pm 0.01$
& $\mathbf{0.32 \pm 0.09}$
& $0.27 \pm 0.02$
& $0.27 \pm 0.03$
& $0.47 \pm 0.09$
& $5.54 \pm 0.72$ \\

NeuralOT
& $0.27 \pm 0.06$
& $4.38 \pm 8.24$
& $61.59 \pm 0.06$
& $61.61 \pm 0.07$
& $1.27 \pm 2.58$
& $0.27 \pm 0.05$ \\
\bottomrule
\end{tabular}
}
\end{table*}

\textbf{Continental Drift (\(\sS^{2}\)).} Following the continental-drift benchmark introduced by \citet{cohen2021riemannianconvexpotentialmaps} and later considered by RNOT \citep{micheli2026riemannianneuraloptimaltransport}, we construct source and target distributions from continental boundaries at \(150\) million years ago and the present day, respectively, using paleogeographic reconstructions from GPlates \citep{https://doi.org/10.1029/2018GC007584}. Points are sampled uniformly with respect to spherical area on \(\sS^2\) and retained when their longitude-latitude coordinates lie within the corresponding continental polygons, yielding empirical unpaired source and target distributions. The plot of predicted points for ManifoldLightOT appears in Figure \ref{figure:continental_drift}.

\vspace{-1mm}

\begin{figure}[h]
    \centering
    \includegraphics[width=1.0\linewidth]{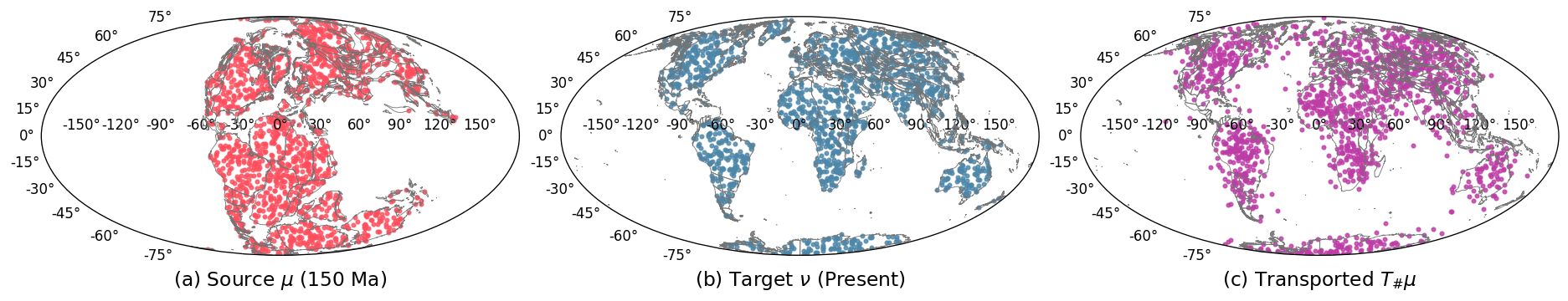}
    \vspace{-5mm}
    \caption{Prediction on continental drift from 150 Ma to the present day for ManifoldLightOT.}

    \label{figure:continental_drift}
\end{figure}

\vspace{-2mm}

\textbf{Crystallographic Texture Transfer.}
Crystallographic texture describes the distribution of crystal orientations within a polycrystalline material and plays an important role in determining its macroscopic mechanical properties. We consider the Ti-6Al-4V electron backscatter diffraction (EBSD) dataset of \citep{zeng2025texture}, which characterizes how crystallographic texture evolves under different hot-rolling conditions. Hot rolling deforms a material by passing it through rollers at elevated temperature, and the reduction percentage denotes the decrease in material thickness during this process. Specifically, we formulate an unpaired transport task from the orientations observed in the starting material to those measured after rolling at \(825^\circ\mathrm{C}\) with \(75\%\) reduction. We consider measurements from both the rolling-direction (RD) and transverse-direction (TD) sections, corresponding respectively to sections aligned with and perpendicular to the rolling direction. We convert the EBSD orientation measurements to unit quaternions; since each orientation represents a three-dimensional rotation, the resulting distributions are naturally supported on \(\mathrm{SO}(3)\). In addition to the MMD metric used in our previous experiments, we evaluate the transported orientations using three crystallography-specific measures. The \emph{normalized ODF difference} compares the predicted and target orientation distribution functions (ODFs), directly measuring agreement between their crystallographic textures, while the \emph{M-index error} compares their texture strengths through the symmetry-aware distribution of crystallographic misorientations. All metrics are computed separately for the RD and TD conditions and they are shown in Table \ref{figure:ebsd}. We also show the  dataset preprocessing and complete metric definitions are provided in Appendix \ref{appendix:datasets} and \ref{appendix:metrics}, respectively.

\vspace{-5mm}

\begin{figure}
    \centering
    \includegraphics[width=1.0\linewidth]{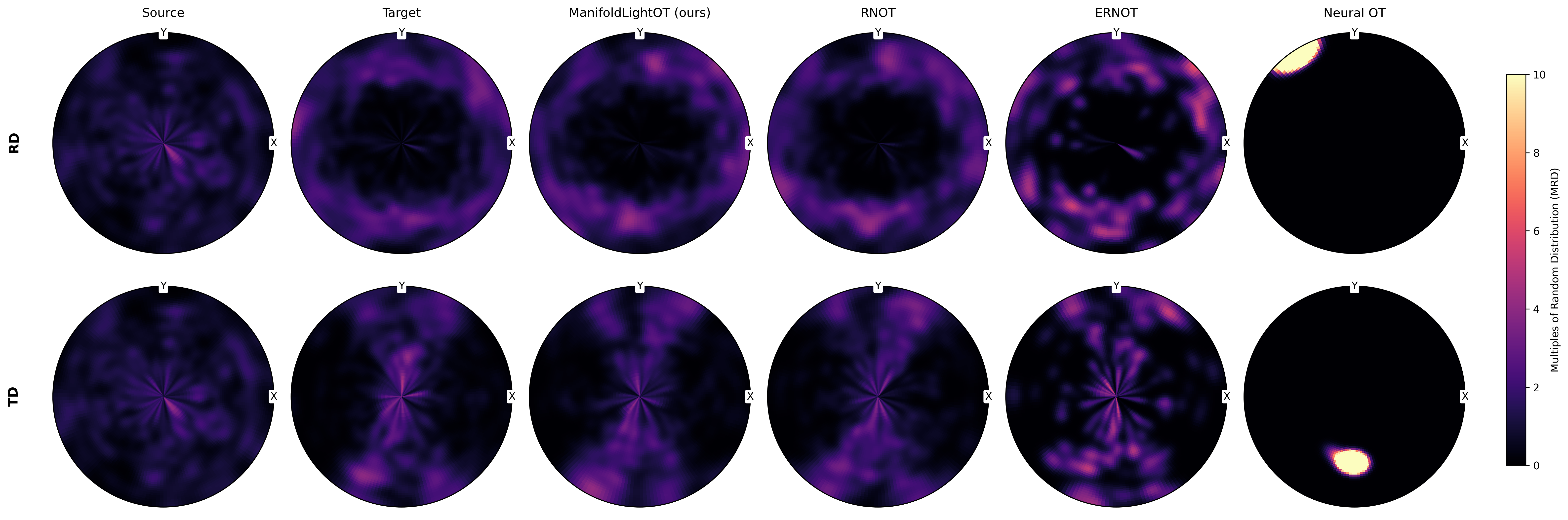}
    \vspace{-3mm}
    \caption{Pole-density plots for the \(\{0001\}\) orientation family in the EBSD texture-transfer task, shown for the RD and TD conditions. For each orientation quaternion \(q\), we take the fixed direction \(h=(0,0,1)\), rotate it into the sample frame, project the resulting unit vector onto the two-dimensional disk using stereographic projection, and estimate the density of the projected points. Colors indicate orientation concentration in multiples of random distribution (MRD).}

    \label{figure:ebsd}
\end{figure}

\begin{table}[h]
\centering
\caption{Results on the crystallographic texture transfer task for the rolling-direction (RD) and transverse-direction (TD) conditions. Lower is better for all metrics.}
\label{tab:ebsd}

\vspace{1mm}

\resizebox{\linewidth}{!}{
\begin{tabular}{lccccc}
\toprule
\makecell[c]{\textbf{Method}} &
\makecell[c]{\textbf{Direction}} &
\makecell[c]{\textbf{MMD$_{\mathrm{SO}(3)}$}} &
\makecell[c]{\textbf{Texture Index}\\\textbf{Error}} &
\makecell[c]{\textbf{Normalized ODF}\\\textbf{Difference}} &
\makecell[c]{\textbf{M-Index Error}} \\
\midrule
\multirow{2}{*}{ManifoldLightOT (ours)}
& RD & \textbf{0.000592} & \textbf{0.01168} & \textbf{0.00270} & \textbf{0.00425} \\
& TD & 0.000830 & \textbf{0.00112} & \textbf{0.00265} & \textbf{0.00883} \\
\midrule
\multirow{2}{*}{RNOT}
& RD & 0.000635 & 0.09022 & 0.01298 & 0.01304 \\
& TD & \textbf{0.000680} & 0.00441 & 0.01936 & 0.02033 \\
\midrule
\multirow{2}{*}{ERNOT}
& RD & 0.000901 & 0.11089 & 0.12242 & 0.01650 \\
& TD & 0.001056 & 0.13406 & 0.18600 & 0.01554 \\
\midrule
\multirow{2}{*}{Neural OT}
& RD & 0.482989 & 37.00789 & 36.40364 & 0.90857 \\
& TD & 0.997347 & 43.39924 & 45.39681 & 0.88277 \\
\bottomrule
\end{tabular}
}
\end{table}

\vspace{-5mm}

\section{Discussion}
\label{section:discussion}

\vspace{-2mm}

ManifoldLightOT combines strong empirical performance with a simpler training and sampling procedure than existing manifold-aware EOT solvers. The experiments show that the proposed kernel-induced EOT formulations adapt well across the considered geometries, including spheres, tori, \(\mathrm{SO}(3)\), and \(\mathrm{SE}(3)\). By using manifold-specific kernel-potential pairs, the method avoids inner optimization procedures and can be trained from Monte Carlo estimates of the learning objective. %

It is important to note that the proposed pipeline, learning objective, and sampling procedure are not tied to a particular manifold. Extending ManifoldLightOT to a new geometry therefore amounts to deriving an appropriate kernel-potential pair that preserves tractable normalization and direct conditional sampling. This derivation is the main manifold-specific challenge of our formulation and can require additional analytical work. At the same time, the induced costs are not specific to ManifoldLightOT: they define valid geometry-aware EOT objectives and could, in principle, be incorporated into other continuous OT/EOT solvers that support general costs. Thus, the proposed kernels provide both a practical light solver and a set of cost functions for manifold-valued transport.

\subsection*{AI use statement}

In this work, generative AI tools were used to assist with polishing the writing, debugging code, and formatting tables and mathematical expressions. The mathematical formulation, derivations, methodological contributions, and experimental results are original to the authors. All AI-assisted content was reviewed and verified by the authors, who take responsibility for the final content of the work.

\subsection*{Ethics Statement}

This work focuses on methodological developments for entropic optimal transport on manifolds and does not involve human subjects, personally identifiable information, or sensitive data. The experiments use synthetic benchmarks and publicly available scientific datasets. We are not aware of any specific ethical risks beyond those associated with the general use of machine learning methods, and we have aimed to report the methodology, experimental setup, and results transparently.

\subsection*{Reproducibility Statement}

To ensure reproducibility, we provide the full implementation of ManifoldLightOT together with the implementations of all baseline methods used in our experiments, as well as the corresponding configuration files for every reported setting. The appendix documents the experimental setup, dataset construction and preprocessing, evaluation metrics, and additional implementation details. Together, these materials are intended to enable reproduction of the reported results and comparisons.

\bibliography{iclr2027_conference}

@inproceedings{
    korotin2023neural,
    title={Neural Optimal Transport},
    author={Korotin, Alexander and Selikhanovych, Daniil and Burnaev, Evgeny},
    booktitle={International Conference on Learning Representations},
    year={2023},
    url={https://openreview.net/forum?id=d8CBRlWNkqH}
}

@book{villani2008optimal,
  title={Optimal transport: old and new},
  author={Villani, C{\'e}dric},
  volume={338},
  year={2008},
  publisher={Springer Science \& Business Media}
}

@article{peyre2019computational,
  title={Computational optimal transport},
  author={Peyr{\'e}, Gabriel and Cuturi, Marco and others},
  journal={Foundations and Trends{\textregistered} in Machine Learning},
  volume={11},
  number={5-6},
  pages={355--607},
  year={2019},
  publisher={Now Publishers, Inc.}
}

@inproceedings{
      mokrov2024energyguided,
      title={Energy-guided Entropic Neural Optimal Transport},
      author={Petr Mokrov and Alexander Korotin and Alexander Kolesov and Nikita Gushchin and Evgeny Burnaev},
      booktitle={The Twelfth International Conference on Learning Representations},
      year={2024},
      url={https://openreview.net/forum?id=d6tUsZeVs7}
}

@inproceedings{gushchin2023entropic,
  title     = {Entropic Neural Optimal Transport via Diffusion Processes},
  author    = {Nikita Gushchin and Alexander Kolesov and Alexander Korotin and Dmitry P. Vetrov and Evgeny Burnaev},
  booktitle = {Thirty-seventh Conference on Neural Information Processing Systems},
  year      = {2023},
  url       = {https://openreview.net/forum?id=fHyLsfMDIs}
}

@article{cuturi2013sinkhorn,
  title={{Sinkhorn} distances: Lightspeed computation of optimal transport},
  author={Cuturi, Marco},
  journal={Advances in neural information processing systems},
  volume={26},
  year={2013}
}

@phdthesis{genevay2019entropy,
  title={Entropy-regularized optimal transport for machine learning},
  author={Genevay, Aude},
  year={2019},
  school={Paris Sciences et Lettres (ComUE)}
}

@book{schrodinger1931umkehrung,
  title={{\"U}ber die Umkehrung der Naturgesetze},
  author={Schr{\"o}dinger, Erwin},
  year={1931},
  publisher={Verlag der Akademie der Wissenschaften in Kommission bei Walter De Gruyter u. Company}
}

@inproceedings{schrodinger1932theorie,
  title={Sur la th{\'e}orie relativiste de l'{\'e}lectron et l'interpr{\'e}tation de la m{\'e}canique quantique},
  author={Schr{\"o}dinger, Erwin},
  booktitle={Annales de l'institut Henri Poincar{\'e}},
  volume={2},
  number={4},
  pages={269--310},
  year={1932}
}

@article{leonard2013survey,
  title={A survey of the {Schr\"odinger} problem and some of its connections with optimal transport},
  author={L{\'e}onard, Christian},
  journal={arXiv preprint arXiv:1308.0215},
  year={2013}
}

@inproceedings{korotin2024light,
  title={Light {Schr{\"o}dinger} Bridge},
  author={Korotin, Alexander and Gushchin, Nikita and Burnaev, Evgeny},
  booktitle={The Twelfth International Conference on Learning Representations},
  year={2024}
}

@inproceedings{dvurechensky2018computational,
  title={Computational optimal transport: Complexity by accelerated gradient descent is better than by {Sinkhorn}’s algorithm},
  author={Dvurechensky, Pavel and Gasnikov, Alexander and Kroshnin, Alexey},
  booktitle={International conference on machine learning},
  pages={1367--1376},
  year={2018},
  organization={PMLR}
}

@article{hutter2021minimax,
  title={Minimax estimation of smooth optimal transport maps},
  author={H{\"u}tter, Jan-Christian and Rigollet, Philippe},
  year={2021}
}

@article{pooladian2021entropic,
  title={Entropic estimation of optimal transport maps},
  author={Pooladian, Aram-Alexandre and Niles-Weed, Jonathan},
  journal={arXiv preprint arXiv:2109.12004},
  year={2021}
}

@inproceedings{makkuva2020optimal,
  title={Optimal transport mapping via input convex neural networks},
  author={Makkuva, Ashok and Taghvaei, Amirhossein and Oh, Sewoong and Lee, Jason},
  booktitle={International Conference on Machine Learning},
  pages={6672--6681},
  year={2020},
  organization={PMLR}
}

@article{deb2021rates,
  title={Rates of estimation of optimal transport maps using plug-in estimators via barycentric projections},
  author={Deb, Nabarun and Ghosal, Promit and Sen, Bodhisattva},
  journal={Advances in Neural Information Processing Systems},
  volume={34},
  pages={29736--29753},
  year={2021}
}

@misc{debortoli2023diffusionschrodingerbridgeapplications,
      title={Diffusion Schr\"odinger Bridge with Applications to Score-Based Generative Modeling}, 
      author={Valentin De Bortoli and James Thornton and Jeremy Heng and Arnaud Doucet},
      year={2023},
      eprint={2106.01357},
      archivePrefix={arXiv},
      primaryClass={stat.ML},
      url={https://arxiv.org/abs/2106.01357}, 
}

@misc{tong2024simulationfreeschrodingerbridgesscore,
      title={Simulation-free Schr\"odinger bridges via score and flow matching}, 
      author={Alexander Tong and Nikolay Malkin and Kilian Fatras and Lazar Atanackovic and Yanlei Zhang and Guillaume Huguet and Guy Wolf and Yoshua Bengio},
      year={2024},
      eprint={2307.03672},
      archivePrefix={arXiv},
      primaryClass={cs.LG},
      url={https://arxiv.org/abs/2307.03672}, 
}

@misc{micheli2026entropicriemannianneuraloptimal,
      title={Entropic Riemannian Neural Optimal Transport}, 
      author={Alessandro Micheli and Silvia Sapora and Anthea Monod and Samir Bhatt},
      year={2026},
      eprint={2605.04255},
      archivePrefix={arXiv},
      primaryClass={stat.ML},
      url={https://arxiv.org/abs/2605.04255}, 
}

@misc{chen2024flowmatchinggeneralgeometries,
      title={Flow Matching on General Geometries}, 
      author={Ricky T. Q. Chen and Yaron Lipman},
      year={2024},
      eprint={2302.03660},
      archivePrefix={arXiv},
      primaryClass={cs.LG},
      url={https://arxiv.org/abs/2302.03660}, 
}

@misc{mathieu2020riemanniancontinuousnormalizingflows,
      title={Riemannian Continuous Normalizing Flows}, 
      author={Emile Mathieu and Maximilian Nickel},
      year={2020},
      eprint={2006.10605},
      archivePrefix={arXiv},
      primaryClass={stat.ML},
      url={https://arxiv.org/abs/2006.10605}, 
}

@misc{debortoli2022riemannianscorebasedgenerativemodelling,
      title={Riemannian Score-Based Generative Modelling}, 
      author={Valentin De Bortoli and Emile Mathieu and Michael Hutchinson and James Thornton and Yee Whye Teh and Arnaud Doucet},
      year={2022},
      eprint={2202.02763},
      archivePrefix={arXiv},
      primaryClass={cs.LG},
      url={https://arxiv.org/abs/2202.02763}, 
}

@misc{micheli2026riemannianneuraloptimaltransport,
      title={Riemannian Neural Optimal Transport}, 
      author={Alessandro Micheli and Yueqi Cao and Anthea Monod and Samir Bhatt},
      year={2026},
      eprint={2602.03566},
      archivePrefix={arXiv},
      primaryClass={cs.LG},
      url={https://arxiv.org/abs/2602.03566}, 
}

@misc{cohen2021riemannianconvexpotentialmaps,
      title={Riemannian Convex Potential Maps}, 
      author={Samuel Cohen and Brandon Amos and Yaron Lipman},
      year={2021},
      eprint={2106.10272},
      archivePrefix={arXiv},
      primaryClass={cs.LG},
      url={https://arxiv.org/abs/2106.10272}, 
}

@article{https://doi.org/10.1029/2018GC007584,
author = {Müller, R. Dietmar and Cannon, John and Qin, Xiaodong and Watson, Robin J. and Gurnis, Michael and Williams, Simon and Pfaffelmoser, Tobias and Seton, Maria and Russell, Samuel H. J. and Zahirovic, Sabin},
title = {GPlates: Building a Virtual Earth Through Deep Time},
journal = {Geochemistry, Geophysics, Geosystems},
volume = {19},
number = {7},
pages = {2243-2261},
doi = {https://doi.org/10.1029/2018GC007584},
url = {https://agupubs.onlinelibrary.wiley.com/doi/abs/10.1029/2018GC007584},
eprint = {https://agupubs.onlinelibrary.wiley.com/doi/pdf/10.1029/2018GC007584},
year = {2018}
}

@misc{gushchin2023buildingbridgeschrodingercontinuous,
      title={Building the Bridge of Schr\"odinger: A Continuous Entropic Optimal Transport Benchmark}, 
      author={Nikita Gushchin and Alexander Kolesov and Petr Mokrov and Polina Karpikova and Andrey Spiridonov and Evgeny Burnaev and Alexander Korotin},
      year={2023},
      eprint={2306.10161},
      archivePrefix={arXiv},
      primaryClass={cs.LG},
      url={https://arxiv.org/abs/2306.10161}, 
}

@misc{korotin2021neuraloptimaltransportsolvers,
      title={Do Neural Optimal Transport Solvers Work? A Continuous Wasserstein-2 Benchmark}, 
      author={Alexander Korotin and Lingxiao Li and Aude Genevay and Justin Solomon and Alexander Filippov and Evgeny Burnaev},
      year={2021},
      eprint={2106.01954},
      archivePrefix={arXiv},
      primaryClass={cs.LG},
      url={https://arxiv.org/abs/2106.01954}, 
}

@misc{persiianov2026learningpopulationdynamicsinverse,
      title={Learning of Population Dynamics: Inverse Optimization Meets JKO Scheme}, 
      author={Mikhail Persiianov and Jiawei Chen and Petr Mokrov and Alexander Tyurin and Evgeny Burnaev and Alexander Korotin},
      year={2026},
      eprint={2506.01502},
      archivePrefix={arXiv},
      primaryClass={cs.LG},
      url={https://arxiv.org/abs/2506.01502}, 
}

@article{zeng2025texture,
  title={Effect of Temperature, Strain and Microstructure on Texture Evolution of Ti-6Al-4V During Hot Rolling},
  author={Zeng, Xiaohan and Daniel, Christopher S. and Asheshov, William and Michalik, Stefan and Prangnell, Philip and Quinta da Fonseca, Jo{\~a}o},
  journal={Metallurgical and Materials Transactions A},
  volume={56},
  number={10},
  pages={4620--4643},
  year={2025},
  doi={10.1007/s11661-025-07929-8}
}

@misc{asadulaev2024neuraloptimaltransportgeneral,
      title={Neural Optimal Transport with General Cost Functionals}, 
      author={Arip Asadulaev and Alexander Korotin and Vage Egiazarian and Petr Mokrov and Evgeny Burnaev},
      year={2024},
      eprint={2205.15403},
      archivePrefix={arXiv},
      primaryClass={cs.LG},
      url={https://arxiv.org/abs/2205.15403}, 
}

@misc{manole2024pluginestimationsmoothoptimal,
      title={Plugin Estimation of Smooth Optimal Transport Maps}, 
      author={Tudor Manole and Sivaraman Balakrishnan and Jonathan Niles-Weed and Larry Wasserman},
      year={2024},
      eprint={2107.12364},
      archivePrefix={arXiv},
      primaryClass={math.ST},
      url={https://arxiv.org/abs/2107.12364}, 
}

@misc{seguy2018largescaleoptimaltransportmapping,
      title={Large-Scale Optimal Transport and Mapping Estimation}, 
      author={Vivien Seguy and Bharath Bhushan Damodaran and Rémi Flamary and Nicolas Courty and Antoine Rolet and Mathieu Blondel},
      year={2018},
      eprint={1711.02283},
      archivePrefix={arXiv},
      primaryClass={stat.ML},
      url={https://arxiv.org/abs/1711.02283}, 
}

@misc{daniels2025scorebasedgenerativeneuralnetworks,
      title={Score-based Generative Neural Networks for Large-Scale Optimal Transport}, 
      author={Mara Daniels and Tyler Maunu and Paul Hand},
      year={2025},
      eprint={2110.03237},
      archivePrefix={arXiv},
      primaryClass={cs.LG},
      url={https://arxiv.org/abs/2110.03237}, 
}

@article{SKEMER2005157,
title = {The misorientation index: Development of a new method for calculating the strength of lattice-preferred orientation},
journal = {Tectonophysics},
volume = {411},
number = {1},
pages = {157-167},
year = {2005},
issn = {0040-1951},
doi = {https://doi.org/10.1016/j.tecto.2005.08.023},
url = {https://www.sciencedirect.com/science/article/pii/S0040195105004439},
author = {Philip Skemer and Ikuo Katayama and Zhenting Jiang and Shun-ichiro Karato},
}

@ARTICLE{ottseismic2024,
       author = {{Ott}, Jason N. and {Condit}, Cailey B. and {Schulte-Pelkum}, Vera and {Bernard}, Rachel and {Pec}, Matej},
        title = "{Seismic Anisotropy of Mafic Blueschists: EBSD-Based Constraints From the Exhumed Rock Record}",
      journal = {Journal of Geophysical Research (Solid Earth)},
         year = 2024,
        month = feb,
       volume = {129},
       number = {2},
          eid = {e2023JB027679},
        pages = {e2023JB027679},
          doi = {10.1029/2023JB027679},
       adsurl = {https://ui.adsabs.harvard.edu/abs/2024JGRB..12927679O}
}

@article{VANHOUTTE2005589,
title = {Deformation texture prediction: from the Taylor model to the advanced Lamel model},
journal = {International Journal of Plasticity},
volume = {21},
number = {3},
pages = {589-624},
year = {2005},
issn = {0749-6419},
doi = {https://doi.org/10.1016/j.ijplas.2004.04.011},
url = {https://www.sciencedirect.com/science/article/pii/S0749641904000610},
author = {Paul {Van Houtte} and Saiyi Li and Marc Seefeldt and Laurent Delannay},
}
\bibliographystyle{iclr2027_conference}

\newpage
\appendix
\section{Appendix}

\subsection{Proofs}
\label{appendix:proofs}

\textbf{Proof of Proposition~\ref{proposition:parameterization}}

\begin{proof}
By definition of the mixture potential,
\[
v_\theta(y)
=
\sum_{k=1}^{K}\alpha_k v_{\theta,k}(y).
\]
Therefore, using linearity of integration, its normalization function is
\[
\begin{aligned}
r_\theta(x)
&=
\int_{\mathcal M}
K_\varepsilon(x,y)v_\theta(y)\,dm(y)\\
&=
\int_{\mathcal M}
K_\varepsilon(x,y)
\sum_{k=1}^{K}\alpha_k v_{\theta,k}(y)\,dm(y)\\
&=
\sum_{k=1}^{K}
\alpha_k
\int_{\mathcal M}
K_\varepsilon(x,y)v_{\theta,k}(y)\,dm(y)\\
&=
\sum_{k=1}^{K}\alpha_k r_{\theta,k}(x).
\end{aligned}
\]

The corresponding conditional distribution is
\[
\begin{aligned}
q_\theta(y|x)
&=
\frac{K_\varepsilon(x,y)v_\theta(y)}
     {r_\theta(x)}\\
&=
\sum_{k=1}^{K}
\frac{\alpha_k r_{\theta,k}(x)}
     {r_\theta(x)}
\frac{K_\varepsilon(x,y)v_{\theta,k}(y)}
     {r_{\theta,k}(x)}\\
&=
\sum_{k=1}^{K}
w_{\theta,k}(x)\,
q_{\theta,k}(y|x),
\end{aligned}
\]
where
\[
w_{\theta,k}(x)
=
\frac{\alpha_k r_{\theta,k}(x)}
     {r_\theta(x)}.
\]
Since all \(\alpha_k\) and \(r_{\theta,k}(x)\) are positive,
\(w_{\theta,k}(x)>0\), and
\[
\sum_{k=1}^{K}w_{\theta,k}(x)=1.
\]
Hence \(q_\theta(\cdot|x)\) is a finite mixture of the component conditionals
\(q_{\theta,k}(\cdot|x)\). Since each component can be sampled directly, one
can sample from \(q_\theta(\cdot|x)\) by first drawing
\(k\sim\operatorname{Categorical}(w_{\theta,1}(x),\ldots,w_{\theta,K}(x))\)
and then sampling from \(q_{\theta,k}(\cdot|x)\). Thus, closed-form
normalization and direct conditional sampling are preserved by the mixture.
\end{proof}

\textbf{Proof of Proposition~\ref{proposition:product_parameterization}}

\begin{proof}
Using the product structure of both the kernel and potential component,
\[
K_\varepsilon(x,y)v_{\theta,k}(y)
=
\prod_{j=1}^{J}
K_{\varepsilon,j}(x^j,y^j)
v_{\theta,k,j}(y^j).
\]
Since the reference measure also factorizes as
\(m=\bigotimes_{j=1}^{J}m_j\), the corresponding normalization function satisfies
\[
\begin{aligned}
r_{\theta,k}(x)
&=
\int_{\mathcal M}
K_\varepsilon(x,y)v_{\theta,k}(y)\,dm(y)\\
&=
\int_{\prod_{j=1}^{J}\mathcal M_j}
\prod_{j=1}^{J}
K_{\varepsilon,j}(x^j,y^j)
v_{\theta,k,j}(y^j)
\prod_{j=1}^{J}dm_j(y^j)\\
&=
\prod_{j=1}^{J}
\int_{\mathcal M_j}
K_{\varepsilon,j}(x^j,y^j)
v_{\theta,k,j}(y^j)\,dm_j(y^j)\\
&=
\prod_{j=1}^{J}
r_{\theta,k,j}(x^j).
\end{aligned}
\]

Therefore, the normalized conditional distribution becomes
\[
\begin{aligned}
q_{\theta,k}(y|x)
&=
\frac{
K_\varepsilon(x,y)v_{\theta,k}(y)
}{
r_{\theta,k}(x)
}\\
&=
\prod_{j=1}^{J}
\frac{
K_{\varepsilon,j}(x^j,y^j)
v_{\theta,k,j}(y^j)
}{
r_{\theta,k,j}(x^j)
}\\
&=
\prod_{j=1}^{J}
q_{\theta,k,j}(y^j|x^j).
\end{aligned}
\]

Hence the conditional factorizes across the component manifolds. Since each
\(q_{\theta,k,j}(\cdot|x^j)\) can be sampled directly by assumption, a sample
from \(q_{\theta,k}(\cdot|x)\) is obtained by independently sampling
\(y^j\sim q_{\theta,k,j}(\cdot|x^j)\) for \(j=1,\ldots,J\) and setting
\(y=(y^1,\ldots,y^J)\). Thus, closed-form normalization and direct conditional
sampling are preserved under Cartesian products.
\end{proof}

\textbf{Proofs of the manifold-specific parameterizations}

\paragraph{Distributional notation.}
For \(g\in\mathbb R^d\), we define
\[
Z_d(g)
=
\int_{\mathbb S^{d-1}}
\exp(g^\top y)\,dm(y),
\]
and denote by \(\operatorname{vMF}(g)\) the von Mises-Fisher distribution on
\(\mathbb S^{d-1}\) with density
\(\exp(g^\top y)/Z_d(g)\).
Since \(Z_d(g)\) depends only on \(\|g\|_2\), we have
\(Z_d(-g)=Z_d(g)\).
For the torus construction, \(\operatorname{vM}(h)\), with
\(h\in\mathbb R^2\), denotes the circular von Mises distribution whose density
is proportional to
\(\exp(h^\top(\cos y,\sin y)^\top)\); its normalizing constant is
\(2\pi I_0(\|h\|_2)\), where \(I_0\) is the modified Bessel function of the
first kind of order zero, defined by
\[
I_0(t)
=
\frac{1}{2\pi}\int_0^{2\pi}\exp(t\cos\phi)\,d\phi.
\]

\paragraph{Sphere.}
Consider one potential component \(v_{\theta,k}\). Multiplying it by the
kernel in \eqref{equation:sphere_kernel_potential} gives
\[
K_\varepsilon(x,y)v_{\theta,k}(y)
=
\exp\!\left(\frac{x^\top y}{\varepsilon}\right)
\exp(f_k^\top y)
=
\exp\!\left(\left(f_k+\frac{x}{\varepsilon}\right)^\top y\right).
\]
Thus, the kernel-potential product is itself an unnormalized von
Mises-Fisher density with natural parameter
\(f_k+x/\varepsilon\). Integrating over the sphere immediately gives
\(r_{\theta,k}(x)=Z_d(f_k+x/\varepsilon)\), and dividing by this
normalizer gives
\(q_{\theta,k}(\cdot|x)=\operatorname{vMF}(f_k+x/\varepsilon)\).
Hence each component satisfies the assumptions of
Proposition~\ref{proposition:parameterization}, which yields closed-form
normalization and direct sampling for the complete mixture \(q_\theta\).

\paragraph{Torus.}
For a fixed coordinate \(j\), the addition identity
\(\cos(x^j-y^j)=\cos x^j\cos y^j+\sin x^j\sin y^j\) gives
\[
\begin{aligned}
&\frac{1}{\varepsilon}\cos(x^j-y^j)
+a_{kj}\cos y^j+b_{kj}\sin y^j\\
&\qquad=
\left(a_{kj}+\frac{1}{\varepsilon}\cos x^j\right)\cos y^j
+
\left(b_{kj}+\frac{1}{\varepsilon}\sin x^j\right)\sin y^j\\
&\qquad=
h_{kj}(x)^\top
\begin{pmatrix}
\cos y^j\\
\sin y^j
\end{pmatrix}.
\end{aligned}
\]
Therefore,
\[
K_\varepsilon(x,y)v_{\theta,k}(y)
=
\prod_{j=1}^{d}
\exp\!\left[
h_{kj}(x)^\top
\begin{pmatrix}
\cos y^j\\
\sin y^j
\end{pmatrix}
\right].
\]
The product factorizes across coordinates. Since the normalizer of each factor
is \(2\pi I_0(\|h_{kj}(x)\|_2)=2\pi I_0(\rho_{kj}(x))\), integration gives
\[
r_{\theta,k}(x)
=
\prod_{j=1}^{d}
2\pi I_0(\rho_{kj}(x)),
\qquad
q_{\theta,k}(\cdot|x)
=
\prod_{j=1}^{d}\operatorname{vM}(h_{kj}(x)).
\]
Thus the coordinates can be sampled independently, and
Proposition~\ref{proposition:parameterization} gives the corresponding
mixture construction.

\paragraph{Rotation Group.}
Let
\[
g_{k,s}(x)=f_k+s\frac{\lambda_R}{\varepsilon}x,
\qquad s\in\{-1,+1\}.
\]
Using
\(\cosh(a)\cosh(b)
=\frac{1}{2}[\cosh(a+b)+\cosh(a-b)]\), we obtain
\[
K_\varepsilon(x,y)v_{\theta,k}(y)
=
\frac{1}{2}
\sum_{s\in\{-1,+1\}}
\cosh\!\left(g_{k,s}(x)^\top y\right).
\]
For any \(g\in\mathbb R^4\),
\[
\int_{\mathbb S^3}\cosh(g^\top y)\,dm(y)
=
\frac{Z_4(g)+Z_4(-g)}{2}
=
Z_4(g).
\]
Consequently,
\[
r_{\theta,k}(x)
=
\frac{1}{2}
\left[
Z_4\!\left(f_k+\frac{\lambda_R}{\varepsilon}x\right)
+
Z_4\!\left(f_k-\frac{\lambda_R}{\varepsilon}x\right)
\right].
\]
After normalization, the two terms are selected with probabilities
\[
\pi_{k,s}(x)
=
\frac{
Z_4\!\left(f_k+s\frac{\lambda_R}{\varepsilon}x\right)
}{
Z_4\!\left(f_k+\frac{\lambda_R}{\varepsilon}x\right)
+
Z_4\!\left(f_k-\frac{\lambda_R}{\varepsilon}x\right)
}.
\]
Recall that the unit quaternions \(y\) and \(-y\) represent the same rotation.
Accordingly, a density proportional to \(\cosh(g^\top y)\) on
\(\mathrm{SO}(3)\) can be sampled by drawing a quaternion from
\(\operatorname{vMF}(g)\) on \(\mathbb S^3\) and then mapping it to its
corresponding rotation. Hence
\[
q_{\theta,k}(\cdot|x)
=
\sum_{s\in\{-1,+1\}}
\pi_{k,s}(x)\,
\operatorname{vMF}\!\left(f_k+s\frac{\lambda_R}{\varepsilon}x\right),
\]
where the sampled quaternion is understood to be mapped to
\(\mathrm{SO}(3)\). Each component is therefore directly sampleable, and
Proposition~\ref{proposition:parameterization} gives the complete mixture.

\paragraph{Special Euclidean Group.}
We use the product representation
\(\mathrm{SE}(3)\simeq\mathbb R^3\times\mathrm{SO}(3)\).
The rotational factor has already been shown above to satisfy the tractability
conditions, so it remains to verify the translational factor. For
\(K_\varepsilon^t\) and the translational part of \(v_{\theta,k}\), define
\[
A_k
=
\left(\sigma_\varepsilon^{-2}I_3+\Sigma_k^{-1}\right)^{-1},
\qquad
\bar{\mu}_k
=
A_k
\left(
\sigma_\varepsilon^{-2}x^t+\Sigma_k^{-1}\mu_k
\right).
\]
Completing the square in \(y^t\) gives
\[
\begin{aligned}
&\frac{\|x^t-y^t\|_2^2}{\sigma_\varepsilon^2}
+
(y^t-\mu_k)^\top\Sigma_k^{-1}(y^t-\mu_k)\\
&\qquad=
(y^t-\bar{\mu}_k)^\top A_k^{-1}(y^t-\bar{\mu}_k)
+
(x^t-\mu_k)^\top
(\Sigma_k+\sigma_\varepsilon^2I_3)^{-1}
(x^t-\mu_k).
\end{aligned}
\]
The dependence on \(y^t\) is therefore Gaussian, so the normalized conditional
is
\(q_{\theta,k}^{t}(\cdot|x^t)
=\mathcal N(\cdot|\bar{\mu}_k,A_k)\).
Integrating the Gaussian term yields
\[
\begin{aligned}
r_{\theta,k}^{t}(x^t)
&=
(2\pi)^{3/2}\sigma_\varepsilon^3
\det(\Sigma_k)^{1/2}
\det(\Sigma_k+\sigma_\varepsilon^2I_3)^{-1/2}\\
&\quad\times
\exp\!\left[
-\frac{1}{2}
(x^t-\mu_k)^\top
(\Sigma_k+\sigma_\varepsilon^2I_3)^{-1}
(x^t-\mu_k)
\right].
\end{aligned}
\]
Hence both the translational and rotational factors satisfy the assumptions of
Proposition~\ref{proposition:parameterization}. Applying
Proposition~\ref{proposition:product_parameterization} gives
\(r_{\theta,k}(x)=r_{\theta,k}^{t}(x^t)r_{\theta,k}^{R}(x^R)\) and
\(q_{\theta,k}(y|x)
=q_{\theta,k}^{t}(y^t|x^t)q_{\theta,k}^{R}(y^R|x^R)\).
Finally, Proposition~\ref{proposition:parameterization} gives the full
\(K\)-component mixture.

\subsection{Datasets}
\label{appendix:datasets}

\subsubsection{Synthetic Datasets}
\label{subsubsection:synthetic_datasets}

For all synthetic experiments, we independently generate \(20000\) training samples and \(5120\) test samples for each of the source and target marginals. Training uses only the unpaired source and target samples, while the held-out test sets are used for evaluation.

\textbf{Synthetic Sphere.}
For the experiments on \(\sS^{D}\subset\mathbb R^{D+1}\), the source distribution is uniform on the sphere. A source sample is generated by drawing \(z\sim\mathcal N(0,I_{D+1})\) and normalizing it as \(x=z/\|z\|_2\). The target is a wrapped normal distribution centered at the fixed point \(\ell=(-1,0,\ldots,0)\in\sS^D\). We first draw \(\xi\sim\mathcal N(0,\sigma^2 I_D)\) and embed it in \(\mathbb R^{D+1}\) as \(\widetilde{\xi}=(0,\xi)\). Since \(\ell^\top\widetilde{\xi}=0\), this vector is tangent to the sphere at \(\ell\), i.e., it points along the local \(D\)-dimensional plane orthogonal to \(\ell\). The sample is then mapped onto the sphere using the spherical exponential map,
\[
y=
\cos(\|\widetilde{\xi}\|_2)\ell
+
\frac{\sin(\|\widetilde{\xi}\|_2)}{\|\widetilde{\xi}\|_2}\widetilde{\xi},
\]
where \(y=\ell\) when \(\|\widetilde{\xi}\|_2=0\). We use \(\sigma=0.3\) for all dimensions.

\textbf{Synthetic Torus.}
A point on \(\sT^D\) is represented by \(D\) angular coordinates \(\theta=(\theta^1,\ldots,\theta^D)\), with each \(\theta^j\in[0,2\pi)\). Source samples are generated independently as \(\theta^j\sim\operatorname{Unif}[0,2\pi)\). For the target, we use a wrapped Gaussian centered at \((\pi,\ldots,\pi)\). Specifically, we draw \(z\sim\mathcal N(0,\sigma^2I_D)\) and set \(\theta=(\pi\mathbf 1+z)\bmod 2\pi\), where the modulo operation is applied coordinate-wise and \(\mathbf 1\) denotes the \(D\)-dimensional vector of ones. We use \(\sigma=0.3\) in all experiments. When an ambient Euclidean representation is required, each angle is mapped to the corresponding point on the unit circle as \(\theta^j\mapsto(\cos\theta^j,\sin\theta^j)\).

\textbf{Synthetic \(\mathrm{SO}(3)\).}
Rotations are represented by unit quaternions \(q=(q_0,q_1,q_2,q_3)\in\sS^3\), with \(q\) and \(-q\) representing the same three-dimensional rotation. Both marginals are wrapped normal distributions with different center rotations. The source is centered at the identity rotation, represented by \(q_{\mathrm{s}}=(1,0,0,0)\). The target center is obtained by rotating through an angle \(\phi=2.5\) radians around the axis \(a=(0,0,1)\), giving the unit quaternion \(q_{\mathrm{t}}=(\cos(\phi/2),\sin(\phi/2)a)\).

To sample around either center \(q_c\), we draw a three-dimensional perturbation \(v\sim\mathcal N(0,\sigma^2I_3)\), with \(\sigma=0.8\). Writing \(r=\|v\|_2\), this perturbation is converted into the rotation quaternion
\[
\delta(v)
=
\left(
\cos\frac{r}{2},
\frac{\sin(r/2)}{r}v
\right),
\]
with \(\delta(0)=(1,0,0,0)\). The final sample is \(q=q_c\otimes\delta(v)\), where \(\otimes\) denotes quaternion multiplication. Thus, Gaussian perturbations in three-dimensional rotation-vector coordinates are wrapped around the corresponding center rotation. Source and target train and test samples are generated independently.

\textbf{Synthetic \(\mathrm{SE}(3)\).}
A point in \(\mathrm{SE}(3)\) consists of a three-dimensional rotation and a three-dimensional translation, and is represented in our implementation by a unit quaternion \(q\in\sS^3\) together with a translation vector \(t\in\mathbb R^3\). For the source distribution, rotations are sampled uniformly over all three-dimensional rotations, i.e., from the Haar-uniform distribution on \(\mathrm{SO}(3)\), while translations are sampled uniformly from the cube \([-4,4]^3\).

For the target rotation, we use a wrapped normal distribution centered at a rotation of \(60^\circ\) around the \(z\)-axis. As in the \(\mathrm{SO}(3)\) experiment, we draw a rotation vector \(v\sim\mathcal N(0,0.3^2I_3)\), convert it into a rotation quaternion \(\delta(v)\), and compose it with the target center rotation. The target translations follow a Gaussian distribution with mean \(\mu_t=(1,0.5,-0.5)\) and isotropic standard deviation \(0.5\). To keep the translational support identical to that of the source, samples outside \([-4,4]^3\) are rejected and resampled. Rotation and translation are sampled independently within each marginal. Each resulting sample is stored as \((q_0,q_1,q_2,q_3,t_1,t_2,t_3)\).

\subsubsection{Continental Drift}

\textbf{Continental Drift.}
We follow the continental-drift benchmark of \citet{cohen2021riemannianconvexpotentialmaps} using paleogeographic reconstructions from GPlates \citep{https://doi.org/10.1029/2018GC007584}. The source distribution corresponds to continental regions at \(150\) million years ago, while the target corresponds to present-day continental regions. The reconstructions are provided as GeoJSON files containing polygonal continental boundaries in geographic coordinates, with each polygon represented by a sequence of longitude-latitude vertices. Thus, the raw data do not consist of points on \(\sS^2\); instead, these polygons define the continental regions from which spherical samples are obtained.

Points are sampled uniformly with respect to surface area on \(\sS^2\) and restricted to the corresponding continental regions by rejection sampling. Specifically, we draw a longitude \(\lambda\sim\operatorname{Unif}[-\pi,\pi)\) and a scalar \(z\sim\operatorname{Unif}[-1,1]\). The latitude is then \(\varphi=\arcsin z\), and the corresponding point on the unit sphere is
\[
x=
(\cos\varphi\cos\lambda,\,
 \cos\varphi\sin\lambda,\,
 \sin\varphi).
\]
This construction is uniform with respect to spherical surface area. The longitude-latitude coordinates of each candidate point are tested against the continental polygons for the corresponding time period, and candidates outside all polygons are rejected. Accepted points are represented by their three-dimensional unit vectors \(x\in\sS^2\) for training and evaluation. The procedure is applied independently to the \(150\) Ma and present-day polygons, yielding unpaired empirical source and target distributions. We sample \(50{,}000\) training points from each distribution.

\subsubsection{Crystallographic Texture Dataset}
\label{subsubsection:ebsd_dataset}

We use the Ti-6Al-4V electron backscatter diffraction (EBSD) dataset of \citet{zeng2025texture}, which contains crystallographic orientation maps for the starting material and for samples subjected to different hot-rolling conditions. The raw measurements are provided as EBSD maps in the DefDAP format, with each spatial measurement associated with a crystallographic orientation and a phase label. In our experiments, we use the \emph{starting material} as the source state and the material rolled at \(825^\circ\mathrm{C}\) with a \(75\%\) thickness reduction as the target state. We construct two transport tasks corresponding to the rolling-direction (RD) and transverse-direction (TD) sections: for each section, the source distribution is formed by the crystallographic orientations measured in the starting material, while the target distribution is formed by the orientations measured after the \(825^\circ\mathrm{C}\), \(75\%\)-reduction treatment. These data are naturally unpaired because the source and target orientations are obtained from separate EBSD maps acquired at different material states; consequently, an orientation measured at a given spatial location in the starting material has no one-to-one correspondence with an orientation in the rolled sample. The learning problem is therefore to transport between the two empirical orientation distributions rather than between matched orientation pairs.

We preprocess each EBSD map using \texttt{DefDAP}. Given a raw \texttt{.cpr} file, we load the corresponding EBSD map and flatten its orientation and phase arrays so that each measured point is represented independently. Each crystallographic orientation is converted to its quaternion coefficients \(q=(q_0,q_1,q_2,q_3)\), after which the quaternion is normalized to unit norm. Since \(q\) and \(-q\) represent the same element of \(\mathrm{SO}(3)\), we use a canonical representative by changing the sign whenever \(q_0<0\), so that all stored quaternions satisfy \(q_0\geq0\). The resulting unit quaternions, together with their corresponding phase labels, are saved as PyTorch tensors for subsequent training and evaluation.

\subsection{Metrics}
\label{appendix:metrics}

\paragraph{Maximum Mean Discrepancy.}
For all manifold experiments, we compare the transported distribution with the target distribution using the squared Maximum Mean Discrepancy (MMD) equipped with a kernel adapted to the corresponding manifold. Let \(P\) and \(Q\) denote two probability distributions on \(\mathcal M\), and let \(k:\mathcal M\times\mathcal M\rightarrow\mathbb R\) be a positive-definite kernel. The squared MMD is defined as
\begin{equation}
\operatorname{MMD}^2(P,Q;k)
=
\mathbb E_{x,x'\sim P}[k(x,x')]
+
\mathbb E_{y,y'\sim Q}[k(y,y')]
-
2\mathbb E_{x\sim P,y\sim Q}[k(x,y)],
\end{equation}
where \(x,x'\) and \(y,y'\) are independent samples from their respective distributions. Given target samples \(\{y_i\}_{i=1}^{N}\) and transported samples \(\{\widehat y_j\}_{j=1}^{M}\), we use the empirical estimator
\begin{equation}
\widehat{\operatorname{MMD}}^2
=
\frac{1}{N^2}\sum_{i,i'=1}^{N}k(y_i,y_{i'})
+
\frac{1}{M^2}\sum_{j,j'=1}^{M}k(\widehat y_j,\widehat y_{j'})
-
\frac{2}{NM}\sum_{i=1}^{N}\sum_{j=1}^{M}k(y_i,\widehat y_j).
\label{equation:empirical_mmd}
\end{equation}
This is the biased empirical estimator, with the diagonal terms included, and corresponds directly to the quantity computed in our implementation. Lower values indicate closer agreement between the transported and target distributions. We use different kernels for each manifold in order to respect its geometry.

\paragraph{Sphere.}
For \(x,y\in\sS^{d-1}\), we use the von Mises-Fisher kernel
\begin{equation}
k_{\sS}(x,y)
=
\exp\!\left[\kappa\left(x^\top y-1\right)\right],
\label{equation:mmd_sphere_kernel}
\end{equation}
where \(\kappa>0\) controls the concentration of the kernel. Since \(x\) and \(y\) are unit vectors, \(x^\top y\) depends only on their relative position on the sphere, and the subtraction of \(1\) normalizes the kernel such that \(k_{\sS}(x,x)=1\).

\paragraph{Torus.}
For \(x,y\in\sT^d\), represented by angular coordinates \(x=(x^1,\ldots,x^d)\) and \(y=(y^1,\ldots,y^d)\), we use the product von Mises kernel
\begin{equation}
k_{\sT}(x,y)
=
\exp\!\left[
\kappa
\sum_{j=1}^{d}
\left(
\cos(x^j-y^j)-1
\right)
\right].
\label{equation:mmd_torus_kernel}
\end{equation}
Equivalently, this kernel is the product of one-dimensional circular von Mises kernels over the \(d\) coordinates. It is invariant to shifts of any angular coordinate by integer multiples of \(2\pi\) and therefore respects the periodic structure of \(\sT^d\).

\paragraph{Rotation group.}
For \(\mathrm{SO}(3)\), rotations are represented by unit quaternions \(q,q'\in\sS^3\), with \(q\) and \(-q\) corresponding to the same rotation. We first compute their rotation angle
\begin{equation}
\theta(q,q')
=
2\arccos\!\left(\left|q^\top q'\right|\right)
\in[0,\pi].
\label{equation:so3_mmd_distance}
\end{equation}
The absolute inner product makes the distance invariant to the quaternion sign ambiguity. We then use a normalized heat kernel on \(\mathrm{SO}(3)\),
\begin{equation}
k_{\mathrm{SO}(3)}(q,q')
=
\frac{
\displaystyle
\sum_{\ell=0}^{\infty}
(2\ell+1)
e^{-\tau\ell(\ell+1)}
\chi_\ell\!\left(\theta(q,q')\right)
}{
\displaystyle
\sum_{\ell=0}^{\infty}
(2\ell+1)^2
e^{-\tau\ell(\ell+1)}
},
\label{equation:mmd_so3_kernel}
\end{equation}
where \(\tau>0\) is the heat time and
\begin{equation}
\chi_\ell(\theta)
=
1+2\sum_{m=1}^{\ell}\cos(m\theta)
=
\frac{\sin\left((\ell+\frac{1}{2})\theta\right)}
{\sin(\theta/2)}
\end{equation}
is the character of the \((2\ell+1)\)-dimensional irreducible representation of \(\mathrm{SO}(3)\). The denominator corresponds to the value of the unnormalized kernel at the identity and therefore ensures \(k_{\mathrm{SO}(3)}(q,q)=1\). In practice, the spectral expansion is truncated once the remaining tail is below a numerical tolerance.

\paragraph{Special Euclidean group.}
For \(x=(q,t)\) and \(y=(q',t')\) in \(\mathrm{SE}(3)\), where \(q,q'\in\mathrm{SO}(3)\) denote rotations and \(t,t'\in\mathbb R^3\) translations, we use a product kernel combining the rotational heat kernel above with a Gaussian kernel on the translation component:
\begin{equation}
k_{\mathrm{SE}(3)}(x,y)
=
k_{\mathrm{SO}(3)}^{\,\tau/\alpha^2}(q,q')
\exp\!\left(
-\frac{\|t-t'\|_2^2}{4\tau}
\right).
\label{equation:mmd_se3_kernel}
\end{equation}
Here, \(k_{\mathrm{SO}(3)}^{\,\tau/\alpha^2}\) denotes the \(\mathrm{SO}(3)\) heat kernel in \eqref{equation:mmd_so3_kernel} evaluated with heat time \(\tau/\alpha^2\), while \(\alpha>0\) controls the relative scale between rotational and translational differences. This product construction respects the \(\mathbb R^3\times\mathrm{SO}(3)\) representation used for \(\mathrm{SE}(3)\).

\paragraph{Kernel parameters.}
For the sphere and torus experiments, the kernel concentration is selected separately for each dimension using the scale of the target distribution. Specifically, we compute the median positive pairwise manifold distance \(s\) among the first \(200\) target samples and set \(\kappa=1/s^2\). For the sphere, the distance used for this bandwidth selection is the spherical angle \(d_{\sS}(x,y)=\arccos(x^\top y)\), while for the torus we use the wrapped distance \(d_{\sT}(x,y)=\left(\sum_{j=1}^{D}\operatorname{wrap}(x^j-y^j)^2\right)^{1/2}\), where \(\operatorname{wrap}(\delta)=\operatorname{atan2}(\sin\delta,\cos\delta)\).

For \(D\in\{2,10,20,40,60,80,100\}\), the resulting sphere concentrations are \(\kappa_{\sS}\in\{4.071,0.700,0.438,0.458,0.808,2.145,10.157\}\), respectively. For the same dimensions, the torus concentrations are \(\kappa_{\sT}\in\{4.010,0.5947,0.2873,0.1412,0.09363,0.07003,0.05593\}\), respectively.

For \(\mathrm{SO}(3)\), we use the heat kernel with a fixed heat time \(\tau=0.5\) for all experiments. For \(\mathrm{SE}(3)\), we likewise use \(\tau=0.5\), with the rotational factor evaluated at heat time \(\tau/\alpha^2\) and \(\alpha=2\), as in the kernel definition above.

\paragraph{Crystallographic texture metrics.}
For the crystallographic texture experiment, let
\(\{y^{(i)}\}_{i=1}^{N}\) and
\(\{\widehat y^{(j)}\}_{j=1}^{M}\)
denote the target and transported orientations, respectively. We estimate their orientation distribution functions (ODFs) using the same \(\mathrm{SO}(3)\) heat kernel introduced above. Specifically, we use the heat-kernel density
\begin{equation}
\widetilde{k}_{\mathrm{SO}(3)}^{\,\tau}(q,q')
=
\sum_{\ell=0}^{\infty}
(2\ell+1)
e^{-\tau\ell(\ell+1)}
\chi_\ell\!\left(\theta(q,q')\right),
\end{equation}
which corresponds to the numerator of \eqref{equation:mmd_so3_kernel}. Unlike the MMD kernel, no normalization by its value at the identity is required here. The target and predicted ODFs are estimated as
\begin{equation}
f_{\mathrm{tgt}}(g)
=
\frac{1}{N}
\sum_{i=1}^{N}
\widetilde{k}_{\mathrm{SO}(3)}^{\,\tau_{\mathrm{tex}}}
\left(g,y^{(i)}\right),
\qquad
f_{\mathrm{pred}}(g)
=
\frac{1}{M}
\sum_{j=1}^{M}
\widetilde{k}_{\mathrm{SO}(3)}^{\,\tau_{\mathrm{tex}}}
\left(g,\widehat y^{(j)}\right),
\end{equation}
where \(g\in\mathrm{SO}(3)\) and we set \(\tau_{\mathrm{tex}}=0.01\).

The \emph{texture index} measures the concentration of an ODF and is defined as
\begin{equation}
\mathrm{TI}(f)
=
\int_{\mathrm{SO}(3)} f(g)^2\,dg.
\end{equation}
We report the relative texture-index error
\begin{equation}
E_{\mathrm{TI}}
=
\frac{
\left|
\mathrm{TI}(f_{\mathrm{pred}})
-
\mathrm{TI}(f_{\mathrm{tgt}})
\right|
}{
\mathrm{TI}(f_{\mathrm{tgt}})
}.
\end{equation}
In practice, these integrals are evaluated directly from the orientation samples using the heat-kernel identity, resulting in pairwise evaluations of
\(\widetilde{k}_{\mathrm{SO}(3)}^{\,2\tau_{\mathrm{tex}}}\); diagonal self-pairs are excluded when estimating each texture index.

The \emph{normalized ODF difference} measures the discrepancy between the complete predicted and target orientation distributions:
\begin{equation}
E_{\mathrm{ODF}}
=
\frac{
\int_{\mathrm{SO}(3)}
\left(
f_{\mathrm{pred}}(g)-f_{\mathrm{tgt}}(g)
\right)^2\,dg
}{
\int_{\mathrm{SO}(3)}
f_{\mathrm{tgt}}(g)^2\,dg
}.
\end{equation}

Finally, the \emph{M-index} characterizes texture through the distribution of crystallographic misorientation angles. We compute HCP symmetry-aware disorientation angles from pairs of orientations and construct normalized histograms
\(p_{\mathrm{tgt}}\) and \(p_{\mathrm{pred}}\). Given the corresponding histogram \(p_{\mathrm{rand}}\) for uniformly random orientations, the M-index is
\begin{equation}
M(p)
=
\frac{1}{2}
\sum_{b=1}^{B}
\left|
p^{(b)}-p_{\mathrm{rand}}^{(b)}
\right|,
\end{equation}
and we report
\begin{equation}
E_{\mathrm{M}}
=
\left|
M(p_{\mathrm{pred}})
-
M(p_{\mathrm{tgt}})
\right|.
\end{equation}
We use \(B=180\) bins, up to \(4096\) orientation pairs for each distribution, and \(100{,}000\) random orientation pairs to construct \(p_{\mathrm{rand}}\). Lower values indicate better agreement with the target for all three metrics.

Quantitative comparison of crystallographic textures is commonly based on descriptors of the orientation distribution and crystallographic misorientations. The texture index, also known as the J-index, is widely used to quantify the strength or concentration of an orientation distribution, while the M-index was introduced by \citet{SKEMER2005157} as a complementary measure based on the distribution of crystallographic misorientation angles. Both measures have subsequently been used in EBSD-based texture analysis~\citep{ottseismic2024}. To directly compare two orientation distributions, normalized squared differences between their ODFs have also been used in deformation-texture and crystal-plasticity studies to quantify agreement between predicted and measured textures~\citep{VANHOUTTE2005589}. Following these approaches, we use the texture index error, normalized ODF difference, and M-index error to evaluate complementary aspects of the agreement between the transported and target crystallographic textures.

\subsection{Implementation Details}
\label{appendix:implementation_details}

For each experiment, all methods use the same training and test distributions and are evaluated on the same test samples. Unless stated otherwise, we use minibatches of size \(256\). ManifoldLightOT, ERNOT, and NeuralOT are implemented in PyTorch, while RNOT follows its original JAX/Flax implementation. All experiments are run on a single GPU. The hyperparameters described below are those used for the reported experiments.

\textbf{ManifoldLightOT.}
ManifoldLightOT directly optimizes the parameters of the geometry-specific mixture potentials introduced in \S\ref{subsection:tractable_parameterizations}, without using an additional neural network. The mixture components are initialized from randomly selected target training samples, with initially uniform mixture weights, and all parameters are optimized with Adam using \(\beta_1=0.9\), \(\beta_2=0.999\), and no weight decay.

\emph{Synthetic experiments.}
For the high-dimensional sphere and torus experiments, we use \(K=30\) components in dimension \(2\) and \(K=10D\) components for \(D\in\{10,20,40,60,80,100\}\), with batch size \(256\). We train for \(50000\) steps for \(D\leq20\), \(100000\) steps for \(D\in\{40,60,80\}\), and \(200000\) steps for \(D=100\). The learning rate is \(10^{-3}\) for the low- and medium-dimensional configurations and \(10^{-2}\) for the higher-dimensional configurations. We use \(\varepsilon=0.1\) up to dimension \(40\) and \(\varepsilon=0.01\) for dimensions \(60\)--\(100\), with \(\sT^2\) also using \(\varepsilon=0.01\). For the synthetic \(\mathrm{SO}(3)\) and \(\mathrm{SE}(3)\) experiments, we use \(K=100\) components, batch size \(256\), and train for \(50{,}000\) steps with learning rate \(10^{-2}\), using \(\varepsilon=0.05\) and \(0.2\), respectively. The \(\mathrm{SO}(3)\) rotational kernel scale is \(\lambda_R=1\), while for \(\mathrm{SE}(3)\) we use rotational scale \(1\) and translational scale \(0.16\).

\emph{Continental Drift.}
We use \(K=1000\) components, \(\varepsilon=0.1\), batch size \(64\), and train for \(10^6\) steps with learning rate \(10^{-2}\).

\emph{Crystallographic texture.}
We use \(K=1000\) components, \(\varepsilon=0.01\), rotational kernel scale \(\lambda_R=2\), batch size \(256\), and train for \(200000\) steps with learning rate \(10^{-2}\).

\textbf{RNOT.}
Our RNOT implementation is based on the official implementation provided by the authors (\url{https://openreview.net/forum?id=ez4oLq7PR3}). The potential is parameterized using the Gromov distance embedding of the input with respect to a set of manifold landmarks, followed by an MLP with two hidden layers of \(128\) units, SiLU activations, and LayerNorm. The inner optimization required to evaluate the \(c\)-transform is performed with Adam using \(\beta_1=0.9\), \(\beta_2=0.999\), and tolerance \(10^{-6}\).

\emph{Synthetic experiments.}
We use \(128\) randomly selected landmarks, batch size \(256\), learning rate \(10^{-3}\), and no learning-rate decay. The inner optimization uses \(2500\) iterations with learning rate \(5\times10^{-2}\). Most experiments are trained for \(1000\) outer steps, increased to \(1500\) for \(\sT^{40}\) and \(2000\) for \(\sT^{80}\), \(\sT^{100}\), and \(\mathrm{SE}(3)\).

\emph{Continental Drift.}
We use \(1024\) landmarks obtained by farthest-point sampling from \(4096\) candidates, batch size \(1024\), and train for \(1000\) outer steps with learning rate \(10^{-3}\) and no learning-rate decay. The inner optimization uses \(1000\) iterations with learning rate \(5\times10^{-2}\).

\emph{Crystallographic texture.}
We use \(128\) randomly selected landmarks, batch size \(256\), and train for \(10{,}000\) outer steps with learning rate \(10^{-3}\) and no learning-rate decay. The inner optimization uses \(2500\) iterations with learning rate \(5\times10^{-2}\).

\textbf{ERNOT.}
We parameterize the ERNOT Schr\"odinger potential using distances to manifold landmarks, followed by a two-layer MLP with \(256\) hidden units per layer, SiLU activations, and LayerNorm on the landmark embedding. The output layer is initialized with standard deviation \(0.01\). The regularization parameter \(\varepsilon\) is estimated separately for each dataset from \(256\) training samples using the scale factor \(0.05\). The potential is optimized with Adam with initial learning rate \(10^{-3}\), together with cosine annealing to zero.

\emph{Synthetic experiments.}
We use \(256\) landmarks, batch size \(256\), and train for \(3000\) steps with initial learning rate \(10^{-3}\).

\emph{Continental Drift.}
We use \(256\) landmarks, batch size \(64\), and train for \(3000\) steps with initial learning rate \(10^{-3}\).

\emph{Crystallographic texture.}
We use \(256\) landmarks, batch size \(256\), and train for \(100{,}000\) steps with initial learning rate \(10^{-3}\). At inference time, we use a target support of \(200\) samples.

\textbf{NeuralOT.}
Our NeuralOT implementation is based on the official implementation (\url{https://github.com/iamalexkorotin/NeuralOptimalTransport}), adapted to the manifold representations and transport costs considered in this work. Both the transport map \(T\) and the potential \(f\) are represented by MLPs with three hidden layers and ReLU activations. The transport network maps between the corresponding manifold coordinate representations and its output is projected back to the manifold, while the potential network outputs a scalar. We use hidden widths of \(256\) or \(512\), depending on the experimental configuration. Training follows the alternating minimax procedure of NeuralOT: several transport-map updates are performed with the potential frozen, followed by one potential update with the transport map frozen. Both networks are optimized independently with Adam using \(\beta_1=0.9\), \(\beta_2=0.999\), and no weight decay. NeuralOT uses the manifold-specific costs corresponding to the kernels in Table~\ref{table:manifold_parameterizations}.

\emph{Synthetic experiments.}
Both networks are optimized with learning rate \(10^{-4}\). We use \(10\) transport-map updates per potential update for all configurations except \(\sT^2\), for which we use \(5\). The batch size is \(256\), except for \(\sT^{40}\), where we use \(512\). On the sphere, we train for \(10{,}000\) steps for \(D\in\{2,10\}\), \(20{,}000\) steps for \(D\in\{20,40\}\), \(30{,}000\) steps for \(D\in\{60,80\}\), and \(50{,}000\) steps for \(D=100\). On the torus, we use \(10{,}000\) steps for \(D\in\{2,10,20\}\) and \(20{,}000\) steps for \(D\in\{40,60,80,100\}\). The \(\mathrm{SO}(3)\) and \(\mathrm{SE}(3)\) experiments are trained for \(20{,}000\) steps.

\emph{Continental Drift.}
NeuralOT is not included in the Continental Drift experiments.

\emph{Crystallographic texture.}
We use hidden width \(512\), batch size \(256\), and \(5\) transport-map updates per potential update. Both networks are optimized with learning rate \(10^{-4}\), and training is performed for \(20{,}000\) steps.

\subsection{Additional Results}
\label{appendix:additional_results}

In this section we present additional results for our synthetic and crystallization experimental setups.

\textbf{Additional High Dimensional Spheres and Tori.} In Table \ref{table:high_dim_sphere_torus} we show additional dimensionalities which complements the ones appearing in Table \ref{table:synthetic_results}. It is important to note that ManifoldLightOT remains showing a strong performance against other methods.

\begin{table*}[h]
\centering
\caption{MMD$^2$ results on high-dimensional spheres and tori, reported in units of $10^{-2}$. We report mean $\pm$ standard deviation over five seeds. Lower is better.}
\label{table:high_dim_sphere_torus}
\resizebox{\textwidth}{!}{
\begin{tabular}{lccccccc}
\toprule
\multicolumn{8}{c}{\textbf{Sphere}} \\
\midrule
Method & $\sS^{2}$ & $\sS^{10}$ & $\sS^{20}$ & $\sS^{40}$ & $\sS^{60}$ & $\sS^{80}$ & $\sS^{100}$ \\
\midrule
ManifoldLightOT & $0.08 \pm 0.01$ & $0.07 \pm 0.01$ & $0.08 \pm 0.01$ & $0.06 \pm 0.00$ & $0.07 \pm 0.00$ & $0.13 \pm 0.01$ & $0.45 \pm 0.05$ \\
RNOT            & $0.13 \pm 0.06$ & $0.10 \pm 0.02$ & $0.10 \pm 0.01$ & $0.11 \pm 0.02$ & $0.18 \pm 0.04$ & $0.43 \pm 0.09$ & $1.47 \pm 0.21$ \\
ERNOT           & $0.12 \pm 0.03$ & $0.15 \pm 0.01$ & $0.20 \pm 0.03$ & $0.20 \pm 0.04$ & $0.22 \pm 0.01$ & $0.25 \pm 0.02$ & $0.32 \pm 0.09$ \\
NeuralOT        & $0.19 \pm 0.07$ & $0.12 \pm 0.04$ & $0.17 \pm 0.02$ & $0.17 \pm 0.02$ & $0.27 \pm 0.06$ & $0.29 \pm 0.04$ & $4.38 \pm 8.24$ \\
\midrule
\multicolumn{8}{c}{\textbf{Torus}} \\
\midrule
Method & $\sT^{2}$ & $\sT^{10}$ & $\sT^{20}$ & $\sT^{40}$ & $\sT^{60}$ & $\sT^{80}$ & $\sT^{100}$ \\
\midrule
ManifoldLightOT & $0.27 \pm 0.06$ & $0.08 \pm 0.02$ & $0.08 \pm 0.01$ & $0.07 \pm 0.00$ & $0.08 \pm 0.00$ & $0.08 \pm 0.01$ & $0.08 \pm 0.00$ \\
RNOT            & $0.18 \pm 0.06$ & $0.20 \pm 0.08$ & $0.20 \pm 0.02$ & $0.39 \pm 0.04$ & $2.15 \pm 0.20$ & $7.49 \pm 0.38$ & $23.15 \pm 0.57$ \\
ERNOT           & $0.14 \pm 0.02$ & $0.22 \pm 0.06$ & $0.27 \pm 0.08$ & $0.27 \pm 0.04$ & $0.27 \pm 0.02$ & $0.26 \pm 0.04$ & $0.27 \pm 0.03$ \\
NeuralOT        & $33.03 \pm 29.80$ & $48.60 \pm 27.08$ & $61.31 \pm 0.20$ & $61.53 \pm 0.09$ & $61.59 \pm 0.06$ & $61.63 \pm 0.05$ & $61.61 \pm 0.07$ \\
\bottomrule
\end{tabular}
}
\end{table*}

\end{document}